\documentclass{article}

\usepackage{microtype}
\usepackage{graphicx}
\usepackage{subcaption}
\usepackage{booktabs} 

\usepackage{hyperref}

\usepackage[accepted]{icml2026}

\usepackage{amsmath}
\usepackage{amssymb}
\usepackage{mathtools}
\usepackage{amsthm}
\usepackage{tcolorbox}
\usepackage{array}
\usepackage{enumitem}

\usepackage[capitalize,noabbrev]{cleveref}

\theoremstyle{plain}
\newtheorem{theorem}{Theorem}[section]
\newtheorem{proposition}[theorem]{Proposition}

\newtheorem{corollary}[theorem]{Corollary}
\theoremstyle{definition}

\theoremstyle{remark}

\usepackage[textsize=tiny]{todonotes}

\icmltitlerunning{PowerFlow: Unlocking the Dual Nature of LLMs via Principled Distribution Matching}

\begin{document}

\twocolumn[
  \icmltitle{PowerFlow: Unlocking the Dual Nature of LLMs via\\ Principled Distribution Matching}



  \icmlsetsymbol{equal}{*}

  \begin{icmlauthorlist}
    \icmlauthor{Ruishuo Chen}{iiis}
    \icmlauthor{Yu Chen}{iiis}
    \icmlauthor{Zhuoran Li}{iiis}
    \icmlauthor{Longbo Huang}{iiis}
  \end{icmlauthorlist}

  \icmlaffiliation{iiis}{Institute for Interdisciplinary Information Sciences, Tsinghua University, Beijing, China}

  \icmlcorrespondingauthor{Longbo Huang}{longbohuang@tsinghua.edu.cn}

  \icmlkeywords{Unsupervised LLM fine-tuning,Distribution matching}

  \vskip 0.3in
]



\printAffiliationsAndNotice{}  

\begin{abstract}
  Unsupervised Reinforcement Learning from Internal Feedback (RLIF) has emerged as a promising paradigm for eliciting the latent capabilities of Large Language Models (LLMs) without external supervision. However, current methods rely on heuristic intrinsic rewards, which often lack a well-defined theoretical optimization target and are prone to degenerative biases. In this work, we introduce PowerFlow, a principled framework that reformulates unsupervised fine-tuning as a distribution matching problem. By casting GFlowNet as an amortized variational sampler for unnormalized densities, we propose a length-aware Trajectory-Balance objective that explicitly neutralizes the structural length biases inherent in autoregressive generation. By targeting $\alpha$-power distributions, PowerFlow enables the directional elicitation of the dual nature of LLMs: sharpening the distribution ($\alpha > 1$) to intensify logical reasoning, or flattening it ($\alpha < 1$) to unlock expressive creativity. Extensive experiments demonstrate that PowerFlow consistently outperforms existing RLIF methods, matching or even exceeding supervised GRPO. Furthermore, by mitigating over-sharpening in aligned models, our approach achieves simultaneous gains in diversity and quality, shifting the Pareto frontier in creative tasks.
\end{abstract}

\section{Introduction}

The ongoing debate regarding whether reinforcement learning (RL) can empower Large Language Models (LLMs) to transcend the capabilities inherent in their base models has sparked renewed interest in the latent potential embedded within pre-trained weights~\cite{yue2025does,shao2025spurious,liu2025prorl}. Consequently, significant research effort has been directed toward eliciting these capabilities without relying on labeled trajectories or external reward signals. In this landscape, Reinforcement Learning from Internal Feedback (RLIF) has emerged as a dominant paradigm. By employing intrinsic rewards derived from self-certainty~\cite{rlif,rent,rlsc} or ensemble-based consistency~\cite{ttrl,empo}, RLIF seeks to induce a process of self-evolution, guiding models toward more confident and consistent outputs to bolster reasoning performance.

Despite their empirical successes, existing RLIF methods predominantly rely on handcrafted reward designs that heuristically specify optimization directions. Lacking a principled theoretical objective, these approaches are often susceptible to the unintended biases inherent in their specific reward formulations. As a result, models frequently suffer from various pathological behaviors---particularly during over-optimization~\cite{ghimire2026prism}---such as distorted response lengths (e.g., collapse~\cite{srt} or explosion~\cite{rlif}), overconfidence~\cite{zhang2025nofree}, and mode collapse~\cite{zuo2026how}.

Recent research attributes reasoning gains from RL post-training to distributional sharpening~\cite{shao2025spurious,yue2025does,karan2025reasoning}, characterizing LLM self-improvement, including existing RLIF paradigms, as the implicit concentration of probability mass relative to the base distribution~\cite{huang2025selfimprovement,zuo2026how}. Conversely, empirical evidence also indicates that over-sharpened distributions, often a byproduct of alignment, can stifle generative diversity and creative expression~\cite{yang2025alignment,west2025base,zhang2025verbalized}.

\begin{figure*}[t]
  \centering
  \includegraphics[width=0.85\linewidth]{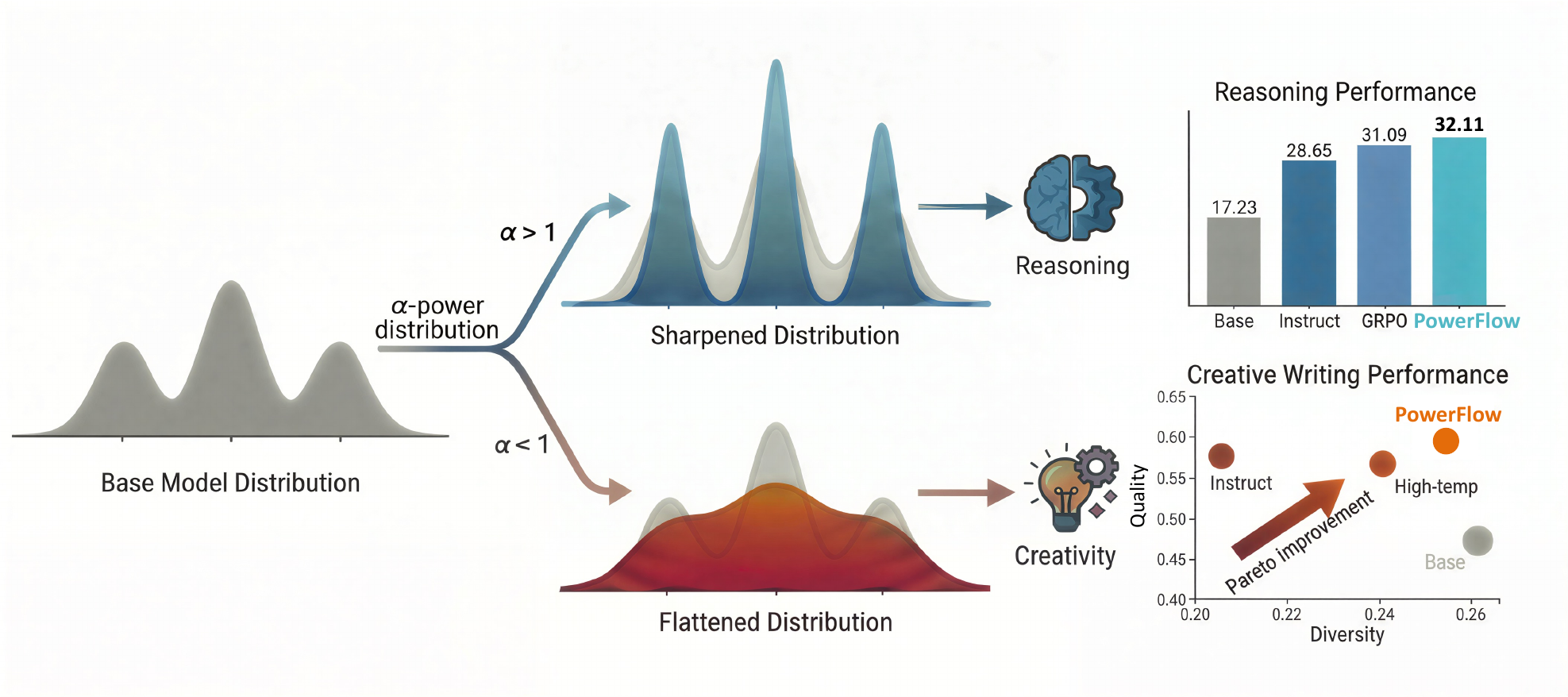}
  \caption{\textbf{Illustration of the PowerFlow framework for directional capability elicitation.} By matching the length-aware $\alpha$-power distribution, PowerFlow can either \textit{sharpen} the distribution ($\alpha > 1$) to enhance logical reasoning or \textit{flatten} it ($\alpha < 1$) to restore latent creativity. The right panels illustrate significant performance gains and a clear Pareto improvement over existing baselines.}
  \label{fig:framework}
\end{figure*}
Inspired by these insights, we propose \textbf{PowerFlow}, a principled framework that reformulates unsupervised fine-tuning as a \textit{distribution matching problem}. Moving beyond heuristic rewards, we target the \textit{$\alpha$-power distribution} of the base model: $p_\alpha(y|q) \propto p_{\text{base}}(y|q)^\alpha$ (also known as the  $\alpha$-order escort distribution~\cite{beck1993thermodynamics}). This target is uniquely motivated by its ability to modulate entropy while strictly preserving the intrinsic structural features and relative mode rankings of the base distribution. We treat the power exponent $\alpha$ as a controllable knob to directionally elicit the dual nature of LLMs: sharpening the distribution ($\alpha > 1$) to concentrate mass on latent reasoning paths, or flattening it ($\alpha < 1$) to release the creative potential typically suppressed in aligned models. We provide a conceptual overview of the PowerFlow framework in Figure~\ref{fig:framework}.

To operationalize this distribution matching, we build upon Generative Flow Network (GFlowNet)~\cite{bengio2021flow, malkin2022trajectory}, a principled paradigm for learning stochastic policies that sample proportional to unnormalized densities. However, applying GFlowNets to the autoregressive structure of LLMs reveals a critical challenge: the exponential decay of trajectory probabilities. Without explicit correction,standard distribution matching objectives are often dominated by length-related variance rather than semantic density, leading to pathological length collapse during sharpening ($\alpha > 1$) or repetitive explosion during flattening ($\alpha < 1$)~\cite{rlif,srt}.

To bridge this gap, we introduce a length-aware Trajectory-Balance (LA-TB) objective tailored for unsupervised LLM alignment. By reparameterizing the partition function into an amortized token-level energy term, we enable optimization on a length-normalized energy surface. This formulation effectively decouples the training gradient from response length and neutralizes the structural length bias inherent in autoregressive generation. As illustrated in Figure~\ref{fig:comparison}, this principled derivation allows PowerFlow to achieve stable optimization and monotonic performance gains that raw distribution matching objectives fail to maintain.

Extensive evaluations across various models and benchmarks reveal that PowerFlow ($\alpha > 1$) consistently outperforms current RLIF methods, matching or even surpassing the performance of supervised GRPO~\cite{shao2024deepseekmath} while preserving the diversity of reasoning paths. Furthermore, when applied to instruction-tuned models, PowerFlow ($\alpha < 1$) yields simultaneous diversity and quality gains in creative writing, successfully unlocking the latent creativity that is typically suppressed during the alignment process.

\textbf{Code Availability.} Code is available at \url{https://github.com/Chenruishuo/PowerFlow}. 

Our contributions are summarized as follows:
\begin{itemize}[itemsep=2pt, topsep=2pt]
    \item \textbf{PowerFlow Framework}: We propose a principled framework that reformulates unsupervised LLM fine-tuning as a distribution matching problem. By targeting the $\alpha$-power distribution, PowerFlow provides a unified theoretical foundation to directionally elicit the model's dual nature---enhancing reasoning or restoring creativity---via a single controllable parameter $\alpha$.
    
    \item \textbf{Length-Aware Objective}: We derive a length-aware Trajectory-Balance objective that neutralizes the exponential length bias inherent in autoregressive generation. This formulation enables stable, principled distribution alignment on a length-normalized energy surface, preventing the degenerative length collapse common in heuristic RLIF methods.
    
    \item \textbf{Dual-Nature Activation}: Extensive experiments demonstrate that PowerFlow ($\alpha > 1$) consistently outperforms existing RLIF methods and matches or exceeds supervised GRPO in reasoning accuracy. Conversely, PowerFlow ($\alpha < 1$) successfully restores the latent creativity of aligned models, achieving simultaneous gains in both output diversity and quality.
\end{itemize}

\section{Related Works}
\label{sec:related_works}

RLIF~\cite{rlif} elicits latent reasoning by replacing external reward models with intrinsic signals. Representative choices include self-certainty~\cite{rlif}, token entropy~\cite{rent}, generation probabilities~\cite{rlsc}, semantic entropy~\cite{empo}, and majority voting~\cite{ttrl,srt}. Lacking a principled target, these handcrafted rewards exhibit recurring failures: overconfidence on instruct models~\cite{zhang2025nofree}, entropy-deflating shortcuts~\cite{ghimire2026prism,zuo2026how}, majority-voting reward hacking~\cite{srt}, and probability-induced length collapse~\cite{rlif,zuo2026how}.

A parallel line interprets RLVR gains~\cite{shao2024deepseekmath,DeepSeekAI2025DeepSeekR1IR} as ``distribution sharpening''~\cite{yue2025does,shao2025spurious}. Existing RLIF analyses in this lens~\cite{huang2025selfimprovement,zuo2026how} stay largely observational and overlook structural length bias. \citet{karan2025reasoning} attain such sharpening via MCMC on the $\alpha$-power target, but at prohibitive inference cost. PowerFlow amortizes principled $\alpha$-power matching into training. Its length-aware Trajectory-Balance objective neutralizes structural length bias and elicits reasoning ($\alpha{>}1$) or creativity ($\alpha{<}1$) under single-pass decoding (extended in Appendix~\ref{apd:extended_related_works}).





\section{PowerFlow: Unsupervised Fine-Tuning via Distribution Matching}

\subsection{The Distribution Matching Formulation}

In the unsupervised fine-tuning of Large Language Models (LLMs), we operate in the absence of external reward signals or ground-truth trajectories. Our primary information source is the base model distribution $p_{\text{base}}(y|q) = \prod_{t=1}^T p_{\text{base}}(y_t|q, y_{<t})$, where $q$ is a given query and $y$ is the generated response. Consequently, any unsupervised training scheme can be conceptualized as a mechanism for redistributing the initial probability mass by leveraging information intrinsic to the base model itself.


In this work, we introduce \textbf{PowerFlow}, a framework that returns to the probabilistic essence of the problem by reformulating unsupervised fine-tuning as a \textit{distribution matching problem}. Our goal is to minimize the divergence between the fine-tuned policy $\pi_\theta$ and a target distribution $p_{\text{target}}$.
We primarily focus on the \textit{$\alpha$-power distribution}, a self-derived and non-heuristic transformation of the base model:
\begin{equation}
    p_\alpha(y|q) = \frac{p_{\text{base}}(y|q)^\alpha}{Z(q, \alpha)},
\end{equation}
where $Z(q, \alpha) = \sum_{y} p_{\text{base}}(y|q)^\alpha$ is the partition function. Unlike handcrafted rewards, the $\alpha$-power distribution, widely known as the escort distribution in statistical mechanics~\cite{beck1993thermodynamics}, provides a principled reshaping of $p_{\text{base}}$ that modulates entropy while strictly preserving its relative probability rankings and mode structure via monotonicity. This ensures that the fine-tuning remains inherently grounded in the model's pre-trained knowledge, effectively circumventing the biased distributional drift often associated with heuristic rewards. Under the PowerFlow framework, we treat $\alpha$ as a controllable knob to directionally elicit the model's dual nature:

\textbf{Reasoning Elicitation ($\alpha > 1$):} Inspired by research linking reasoning gains to distribution sharpening~\cite{yue2025does,karan2025reasoning}, we expect that matching $p_\alpha$ with $\alpha > 1$ will enhance reasoning accuracy. This objective is grounded in the principle of verification-generation asymmetry, supported by both computational complexity theory~\cite{Cook1971TheCO} and empirical studies~\cite{weng2023large,yuan2024self}. This asymmetry, where recognizing a correct solution is easier than generating one, suggests that the model harbors substantial ``hidden knowledge''~\cite{hinton2015distilling} that is not fully manifested during default generation. We hypothesize that a significant performance-capability gap exists due to the base model's relatively flat distribution. Sharpening thus acts as a mechanism to bridge this gap by concentrating probability mass on latent, high-quality reasoning paths, thereby increasing the efficiency of surfacing correct trajectories during standard decoding. Indeed, we theoretically demonstrate that the empirically effective majority-voting based RLIF can be formalized as an implicit mechanism for extreme distribution sharpening, effectively driving the policy towards the dominant mode (see Theorem~\ref{thm:dirac_convergence} in Appendix~\ref{apd:theory}).

\textbf{Creativity Release ($\alpha < 1$):} When $\alpha < 1$, the target distribution is flattened, redistributing probability mass toward the long-tail regions. While applying such flattening to a raw base model may lead to incoherent outputs due to its high intrinsic entropy, it serves a distinct purpose for aligned models. Recent studies observe that alignment procedures, such as RLHF~\cite{ouyang2022training}, frequently induce excessive distribution sharpening, which suppresses the inherent creativity and diversity of the original base model~\cite{west2025base,yang2025alignment,lanchantin2025diverse}. Specifically, \citet{zhang2025verbalized} formalize the ``typicality bias'' inherent in reward models, asserting that aligned models implicitly sample from a $\alpha(>1)$-power distribution of the reference model. This results in a pathological preference for high-probability, typical responses while discarding creative alternatives. We hypothesize that these creative capabilities remain latent within sub-peak probability regions. By matching a flattened distribution, PowerFlow facilitates the recovery of these suppressed expressive capabilities, effectively counteracting typicality bias to enable a more diverse and creative generation space.

Ultimately, PowerFlow transforms unsupervised fine-tuning from a heuristic pursuit of rewards into a principled optimization task, where $\alpha$ serves as a precise control mechanism to navigate the model's latent capability space without the brittleness of manual reward specification. To this end, we first frame GFlowNet as an amortized solution to this distribution matching problem from the perspective of variational inference. We then derive a length-aware Trajectory-Balance objective designed to neutralize the structural length bias inherent in autoregressive generation, thereby ensuring stable and robust optimization. The complete algorithmic workflow is illustrated in Figure~\ref{fig:algorithm_overview}.

\begin{figure}[!ht]
    \centering
    \includegraphics[width=\linewidth]{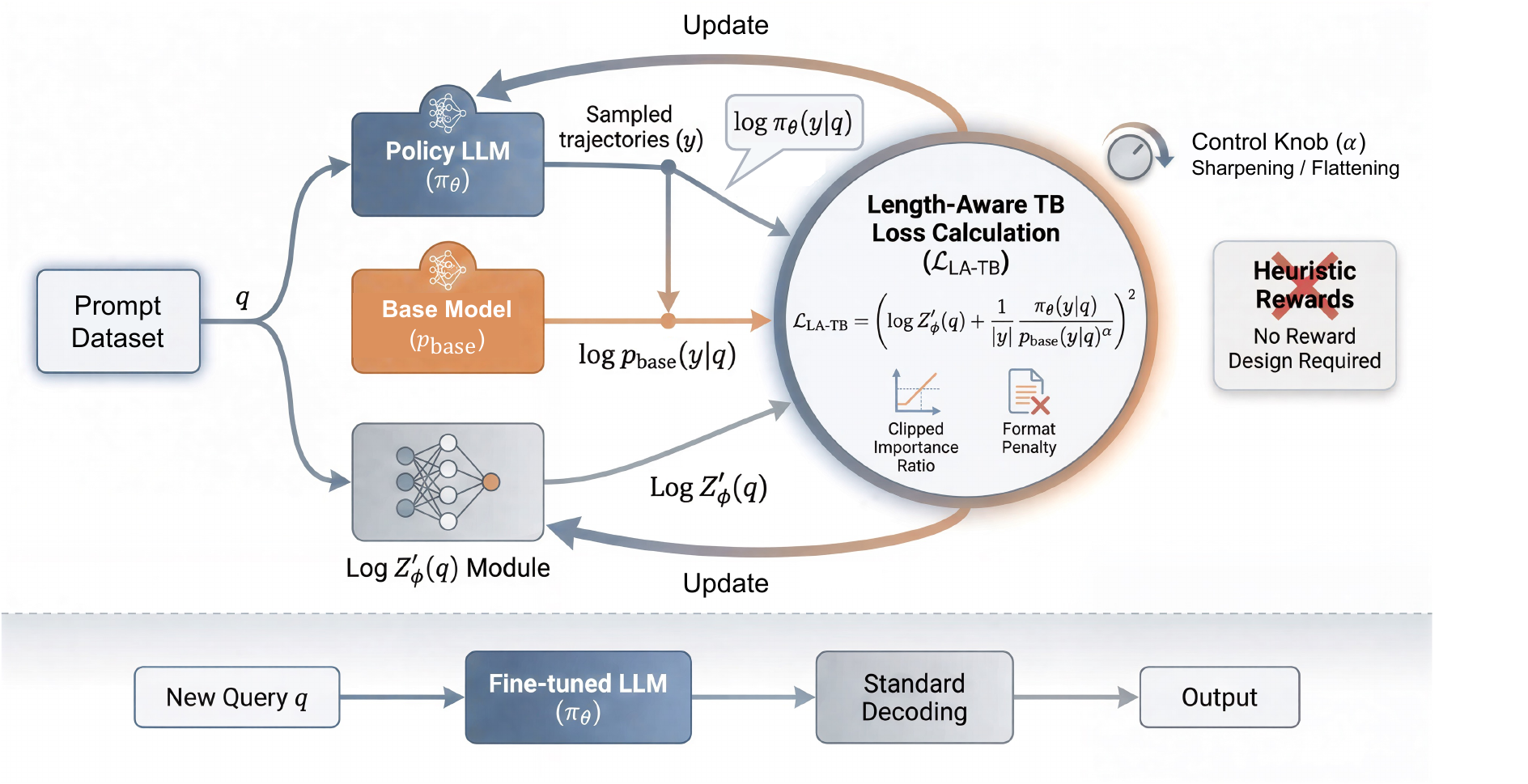}
    \caption{\textbf{The PowerFlow framework.} During training (top), the policy $\pi_\theta$ and $\log Z'_\phi$ module are optimized via the LA-TB objective to match the $\alpha$-power distribution of the base model while neutralizing length bias. The control knob $\alpha$ enables directional elicitation: sharpening ($ \alpha > 1$) for reasoning or flattening ($ \alpha < 1$) for creativity. The inference pipeline (bottom) remains standard.}
    \label{fig:algorithm_overview}
\end{figure}

\subsection{GFlowNets as Amortized Samplers}

A fundamental challenge in matching the target distribution $p_{\text{target}}(y|q)$ is that its partition function, $Z(q) = \sum_y \tilde{p}_{\text{target}}(y|q)$, is computationally intractable due to the combinatorial explosion of the response space $\mathcal{Y}$. From the perspective of Variational Inference (VI), this intractability can be bypassed by expressing the reverse KL divergence via a variational surrogate:
\begin{equation}
\label{eq:kl_decomp}
\mathbb{D}_{\text{KL}}\big(\pi_\theta \| p_{\text{target}}\big) = \mathbb{E}_{y \sim \pi_\theta}\left[\log \frac{\pi_\theta(y|q)}{\tilde{p}_{\text{target}}(y|q)}\right] + \log Z(q).
\end{equation}
Since $\log Z(q)$ is constant with respect to the policy parameters $\theta$, minimizing the reverse KL divergence is equivalent to minimizing the expectation term, which serves as a variational upper bound.

Generative Flow Networks (GFlowNets)~\cite{bengio2021flow} provide a principled framework for this variational setting by learning a policy that acts as an amortized sampler for unnormalized densities. \citet{zimmermann2023a} formally established that the \textit{Trajectory Balance} (TB) objective~\cite{malkin2022trajectory} acts as a specialized variational surrogate for minimizing Eq.~\eqref{eq:kl_decomp}. Intuitively, the forward policy $P_F$ is the actual generation policy from which one samples at inference time to obtain trajectories from the target distribution, while the backward policy $P_B$ serves only as a training-time auxiliary that disambiguates parent states in the underlying DAG.

\begin{proposition}[\citet{zimmermann2023a}]
For a trajectory $\tau=(s_0, s_1, \dots, s_T)$, define the Trajectory Balance objective as:
\begin{equation}
\label{eq:tb_general}
\mathcal{L}_{\text{TB}}(\theta, \phi, \psi; \tau) = \left( \log \frac{Z_\phi \prod_{t=0}^{T-1} P_F(s_{t+1} | s_t;\theta)}{\tilde{p}_{\text{target}}(s_T) \prod_{t=1}^{T} P_B(s_{t-1} | s_t;\psi)} \right)^2.
\end{equation}
If the learned partition function $Z_\phi$ is optimized to its equilibrium, the expected gradient of $\mathcal{L}_{\text{TB}}$ with respect to the forward policy parameters $\theta$ satisfies:
\begin{equation}
\mathbb{E}_{\tau \sim P_F} \left[ \nabla_\theta \mathcal{L}_{\text{TB}}(\theta, \phi, \psi; \tau) \right] = 2 \nabla_\theta \mathbb{D}_{\text{KL}}\big(P_F(\tau;\theta) \| p_{\text{target}}(\tau)\big).
\end{equation}
\end{proposition}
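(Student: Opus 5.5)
The plan is a direct computation on trajectory space. The approach rests on two standard facts: the score-function identity $\mathbb{E}_{\tau\sim P_F}[\nabla_\theta \log P_F(\tau;\theta)] = 0$, and the observation that the trajectory-level target is $p_{\text{target}}(\tau) = \tilde{p}_{\text{target}}(s_T)\prod_{t=1}^{T} P_B(s_{t-1}|s_t;\psi)/Z^\ast$. Here $Z^\ast$ is the true partition function. Since $P_B$ is a probability over parent paths summing to one, this $p_{\text{target}}(\tau)$ is normalized and its terminal marginal is $p_{\text{target}}(s_T)$.

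First I introduce the log-ratio
\[
\delta(\tau) = \log Z_\phi + \log P_F(\tau;\theta) - \log \tilde{p}_{\text{target}}(s_T) - \log P_B(\tau\mid s_T;\psi),
\]
so that $\mathcal{L}_{\text{TB}} = \delta(\tau)^2$. Only $\log P_F$ depends on $\theta$, so $\nabla_\theta \mathcal{L}_{\text{TB}} = 2\,\delta(\tau)\,\nabla_\theta \log P_F(\tau;\theta)$. Here the gradient is taken for a fixed sampled $\tau$, i.e.\ the sample is treated as stop-gradient, which is how the left-hand side of the statement must be read. Rewriting $\delta$ in terms of $p_{\text{target}}(\tau)$ gives
\[
\delta(\tau) = \log \frac{P_F(\tau;\theta)}{p_{\text{target}}(\tau)} + \log Z_\phi - \log Z^\ast .
\]

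Next I differentiate the reverse KL with the log-derivative trick:
\[
\nabla_\theta \mathbb{D}_{\text{KL}}(P_F\|p_{\text{target}}) = \mathbb{E}_{P_F}\!\left[\log\frac{P_F}{p_{\text{target}}}\,\nabla_\theta\log P_F\right] + \mathbb{E}_{P_F}[\nabla_\theta \log P_F].
\]
The second term vanishes by the score identity. Substituting the expression for $\delta$ then gives
\[
\tfrac12\,\mathbb{E}_{P_F}[\nabla_\theta\mathcal{L}_{\text{TB}}] = \nabla_\theta \mathbb{D}_{\text{KL}} + (\log Z_\phi - \log Z^\ast)\,\mathbb{E}_{P_F}[\nabla_\theta\log P_F].
\]
The last term is again zero by the score identity, which yields the claimed factor $2$.

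Finally, I address the role of the equilibrium hypothesis. Setting $\partial_{\log Z_\phi}\mathbb{E}[\delta^2]=0$ gives $\mathbb{E}_{P_F}[\delta]=0$, i.e.\ $\log Z_\phi = \log Z^\ast - \mathbb{D}_{\text{KL}}(P_F\|p_{\text{target}})$. This value is a $\theta$-dependent constant in $\tau$, and it is not differentiated through when computing $\nabla_\theta\mathcal{L}_{\text{TB}}$. The argument above shows that any $\tau$-independent offset is annihilated by the score identity, so the identity in fact holds for any fixed $Z_\phi$. At equilibrium, $\delta$ is additionally the centered log-ratio, i.e.\ the optimal baseline, which is the sense of the cited result.

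The main obstacle is bookkeeping rather than analysis. One must keep straight the expectation of the pathwise gradient (with $\tau$ frozen) versus the gradient of the expectation. One must also confirm that $P_B$ and $Z_\phi$ are independent of $\theta$, and that $p_{\text{target}}(\tau)$ is properly normalized so that its KL to $P_F$ is the intended objective.
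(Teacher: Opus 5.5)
Your proof is correct. The paper gives no proof of its own; it imports the result from \citet{zimmermann2023a}. Your argument is the standard score-function derivation behind that citation. You write $\delta(\tau)=\log\bigl(P_F(\tau;\theta)/p_{\text{target}}(\tau)\bigr)+\log Z_\phi-\log Z^\ast$ and apply $\mathbb{E}_{P_F}[\nabla_\theta\log P_F]=0$ twice, and that is all that is needed.

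Your reading of the left-hand side as the expectation of the per-sample gradient, with the sampled $\tau$ held fixed, is the right one. Under the other reading, $\nabla_\theta\mathbb{E}_{P_F}[\mathcal{L}_{\text{TB}}]$, an extra term $\mathbb{E}_{P_F}[\delta^2\nabla_\theta\log P_F]$ appears and the identity fails in general.

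You are also right that the equilibrium hypothesis is unnecessary when $Z_\phi$ has parameters disjoint from $\theta$. It does real work only if $\log Z_\phi$ shares parameters with the policy, for example a head on the policy's hidden states. In that case the extra term $2\,\mathbb{E}_{P_F}[\delta]\,\nabla_\theta\log Z_\phi$ vanishes precisely because $\mathbb{E}_{P_F}[\delta]=0$ at equilibrium.

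One small correction to your closing remark: at equilibrium $\log Z_\phi$ acts as the mean baseline. That is not in general the variance-minimizing REINFORCE baseline, so calling it "the optimal baseline" overstates it.
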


In the context of LLMs, the autoregressive generation process naturally forms a tree-structured Directed Acyclic Graph (DAG), where each state $s_t$ corresponds to a prefix $y_{<t}$. Since each state in a tree has a unique parent, the backward policy simplifies to $P_B(y_{<t}|y_{\le t}, q) = 1$ for all valid transitions. Identifying the forward policy $P_F$ as our model $\pi_\theta$, the TB loss for a given query $q$ and response $y$ reduces to a form tailored for autoregressive models:
\begin{equation}
\label{eq:tb_llm}
\begin{aligned}
&\mathcal{L}_{\text{TB}}(\theta, \phi; q, y) =\\
&\left( \log Z_\phi(q) + \sum_{t=1}^{T} \log \pi_\theta(y_t | y_{<t}, q) - \log \tilde{p}_{\text{target}}(y|q) \right)^2.
\end{aligned}
\end{equation}
This formulation transforms the distribution matching problem into an RL-style on-policy optimization task:
\begin{equation}
\label{eq:final_obj}
\min_{\theta, \phi} \mathcal{J}(\theta, \phi) = \mathbb{E}_{q \sim \mathcal{D}} \left[ \mathbb{E}_{y \sim \pi_\theta(\cdot|q)} \left[ \mathcal{L}_{\text{TB}}(\theta, \phi; q, y) \right] \right].
\end{equation}
By optimizing Eq.~\eqref{eq:final_obj}, $\pi_\theta$ learns to align its sequence-level probabilities with $\tilde{p}_{\text{target}}$, while $Z_\phi(q)$ amortizes the estimation of the normalization constant to reduce gradient variance. Further implementation and training details are provided in Appendix~\ref{apd:logZ}.

\subsection{Length-Aware PowerFlow Objective}

Autoregressive generation in LLMs is inherently plagued by structural length bias. Specifically, the log-probability of a trajectory, $\log p(y|q) = \sum_{t=1}^{|y|} \log p(y_t | y_{<t}, q)$, is approximately negatively linear with respect to the sequence length $|y|$. Consequently, a naive distribution matching objective is often dominated by sequence length rather than semantic density. For instance, when targeting an $\alpha$-power distribution with $\alpha > 1$ (sharpening), the model tends to exploit the path probability by producing excessively short, trivial sequences. Conversely, when $\alpha < 1$ (flattening), the model is prone to generating repetitive, deterministic long sequences to accumulate probability mass. Furthermore, the extreme sensitivity of path probabilities to $|y|$ causes the gradient of the partition function $Z_\phi$ to exhibit massive variance, severely destabilizing the optimization process.

\begin{figure}[htbp]
  \centering
  \includegraphics[width=\linewidth]{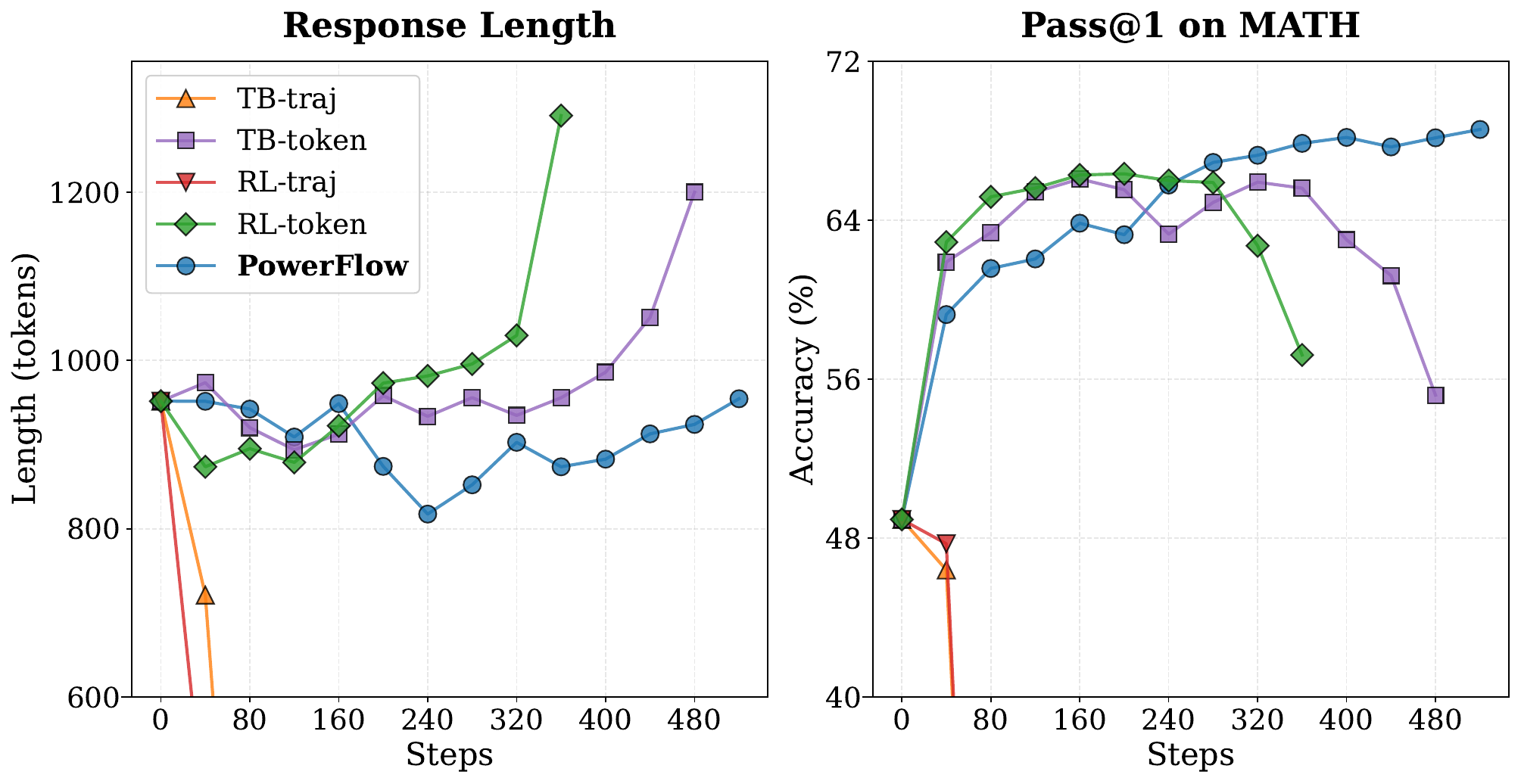}
  \caption{\textbf{Stability analysis of distribution matching strategies.} Matching the trajectory-level $\alpha$-power distribution via standard TB or RL objectives (\textit{-traj}) leads to rapid length collapse. Token-level normalization (\textit{-token}) initially improves performance but eventually decays due to the exploitation of repetitive tokens. PowerFlow maintains both stable response length and superior reasoning accuracy (pass@1 on MATH) throughout training.}
  \label{fig:comparison}
\end{figure}

As illustrated in Figure~\ref{fig:comparison}, directly matching the $\alpha(>1)$-power distribution using either the Trajectory Balance (TB-traj) or RL-based KL-regularized objectives (RL-traj) leads to an immediate and pathological collapse of response length. We include the RL-based formulation as a baseline because the standard RL objective with KL regularization, $\max_{\pi} \mathbb{E}_{y \sim \pi} [r(y)] - \beta \mathbb{D}_{\text{KL}}(\pi \| \pi_{\text{base}})$, theoretically yields an optimal policy $\pi^*(y|q) \propto \pi_{\text{base}}(y|q) \exp(r(y)/\beta)$. By setting the intrinsic reward to the base model's log-probability, $r(y) = \log p_{\text{base}}(y|q)$, the target becomes an $\alpha$-power distribution where $\alpha = 1 + 1/\beta$. 

To counteract length bias, a common heuristic is to optimize the average token log-probability, $\frac{1}{|y|} \log p_{\text{base}}$. While these token-level variants, TB-token and RL-token, exhibit initial performance gains, Figure~\ref{fig:comparison} reveals a subsequent decay. This failure stems from a fundamental distortion of the target distribution's structural integrity. By optimizing for average token-level confidence, these methods reshape the energy surface such that local probability mass is decoupled from global semantic coherence. Consequently, the model exploits repetitive and meaningless tokens to artificially lower the average energy, effectively eroding the learned semantic structure to inflate likelihood metrics through repetitive generation. These observations underscore that both naive RL and standard GFlowNet objectives are profoundly susceptible to reward and structural biases, necessitating a more principled approach to distribution alignment.


To bridge the gap between principled distribution matching and the non-stationary length distributions of LLMs, we introduce a structural reparameterization of the Trajectory-Balance objective. Standard GFlowNets~\cite{malkin2022trajectory} typically treat the partition function $Z_\phi(q)$ as a prompt-dependent scalar, a formulation that is ill-conditioned for autoregressive sequences where probabilities decay exponentially with length. We instead reformulate the normalization constant as a \textit{length-aware energy term}, $Z_\phi(q, y) = (Z'_\phi(q))^{|y|}$, which effectively projects the optimization onto a \textit{length-normalized energy surface}. By further normalizing the log-trajectory mismatch by $|y|$, our LA-TB objective ensures that the optimization gradient remains scale-invariant across varying sequence lengths:
\begin{equation}
\begin{aligned}
\label{eq:la_tb}
&\mathcal{L}_{\text{LA-TB}}(\theta, \phi; q, y) = \left( \log Z'_{\phi} (q) + \frac{1}{|y|} \log \frac{\pi_{\theta} (y|q)}{\tilde{p}_{\text{target}}(y|q)} \right)^2.
\end{aligned}
\end{equation}

This objective converges to a length-normalized target distribution:
\begin{equation}
\label{eq:length_normalized_dist}
\pi^*(y|q) \propto \frac{\tilde{p}_{\text{target}}(y|q)}{Z'_\phi(q)^{|y|}}.
\end{equation}
While Eq.~\eqref{eq:length_normalized_dist} does not preserve the full global ranking of $\tilde p_{\text{target}}$ across arbitrary lengths, the resulting deformation is highly structured rather than arbitrary. Writing $\lambda_q := \log Z'_\phi(q)$, the length-aware distribution takes the form $\pi^*(y|q)\propto \tilde p_{\text{target}}(y|q)\,e^{-\lambda_q|y|}$, which is a one-dimensional exponential tilt by the sufficient statistic $|y|$. This structure admits two clean guarantees.

\begin{proposition}[LA-TB is the I-projection / minimum-distortion length correction]
\label{prop:i-projection}
Fix a prompt $q$ and an expected length budget $\ell_q$. Among all distributions $\pi(\cdot| q)$ with $\mathbb{E}_\pi[|y|]=\ell_q$, the unique minimizer of $\mathbb{D}_{\mathrm{KL}}(\pi(\cdot|q)\,\|\,\tilde p_{\text{target}}(\cdot|q))$ has the form $\pi^\star(y|q)\propto \tilde p_{\text{target}}(y|q)\,e^{-\lambda_q|y|}$. Hence Eq.~\eqref{eq:length_normalized_dist} is exactly the I-projection of $\tilde p_{\text{target}}$ onto the family of length-calibrated distributions: it is the least-distorting modification of $\tilde p_{\text{target}}$ that aligns its expected response length with the model.
\end{proposition}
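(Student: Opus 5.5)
We treat this as a maximum-entropy / I-projection problem with one linear moment constraint and solve it with Lagrangian duality. Fix $q$ and write $\tilde p(y)$ for $\tilde p_{\text{target}}(y|q)$, with $Z=\sum_y \tilde p(y)$. Let $p=\tilde p/Z$. Then $\mathbb{D}_{\mathrm{KL}}(\pi\|\tilde p)=\mathbb{D}_{\mathrm{KL}}(\pi\|p)-\log Z$, so the minimizers are the same whether we use the normalized or unnormalized target, and we may work with $p$.

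The feasible set $\mathcal{C}=\{\pi:\sum_y\pi(y)=1,\ \sum_y\pi(y)|y|=\ell_q\}$ is convex, and $\pi\mapsto\mathbb{D}_{\mathrm{KL}}(\pi\|p)$ is strictly convex. So any minimizer is unique, and it remains to exhibit one of the claimed form.

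The cleanest route avoids Lagrange multipliers on an infinite space and uses a Pythagorean identity. For any $\lambda$ with $M(\lambda):=\sum_y p(y)e^{-\lambda|y|}<\infty$, define $\pi_\lambda(y)=p(y)e^{-\lambda|y|}/M(\lambda)$. For every $\pi\in\mathcal{C}$ we compute directly:
\begin{equation*}
\mathbb{D}_{\mathrm{KL}}(\pi\|p)=\mathbb{D}_{\mathrm{KL}}(\pi\|\pi_\lambda)+\sum_y\pi(y)\log\frac{\pi_\lambda(y)}{p(y)}=\mathbb{D}_{\mathrm{KL}}(\pi\|\pi_\lambda)-\lambda\ell_q-\log M(\lambda).
\end{equation*}
The last step uses only the moment constraint. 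The final two terms do not depend on $\pi$. Hence, as soon as $\pi_\lambda$ itself lies in $\mathcal{C}$, i.e. $\mathbb{E}_{\pi_\lambda}[|y|]=\ell_q$, the objective is minimized exactly at $\pi=\pi_\lambda$, with equality iff $\mathbb{D}_{\mathrm{KL}}(\pi\|\pi_\lambda)=0$. This gives both existence and uniqueness, and the minimizer is $\pi^\star\propto\tilde p\,e^{-\lambda_q|y|}$. Identifying $\lambda_q=\log Z'_\phi(q)$ recovers Eq.~\eqref{eq:length_normalized_dist}.

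The main obstacle is showing that some $\lambda_q$ attains the prescribed mean length. We use standard exponential-family facts. On the open domain $\Lambda=\{\lambda:M(\lambda)<\infty\}$, which contains $[0,\infty)$ since $|y|\ge 0$, the function $\log M$ is smooth and convex. Its derivative satisfies $-\tfrac{d}{d\lambda}\log M=\mathbb{E}_{\pi_\lambda}[|y|]$, and this is strictly decreasing in $\lambda$ because its derivative is $-\mathrm{Var}_{\pi_\lambda}(|y|)<0$ whenever lengths are non-degenerate. As $\lambda\to\infty$ the mean tends to the minimal length $\min\{|y|:p(y)>0\}$. As $\lambda$ approaches the left endpoint of $\Lambda$ the mean increases, to $\max|y|$ when responses have a bounded maximal length $T_{\max}$ as in LLM decoding, or to $\infty$ in the steep case. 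By continuity and the intermediate value theorem, every $\ell_q$ strictly between these limits is attained by a unique $\lambda_q$.

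We will state this as the implicit regularity assumption. Natural lengths are bounded by the context window, so $\Lambda=\mathbb{R}$ and the mean map is a bijection onto $(\ell_{\min},T_{\max})$. For $\ell_q$ at the boundary values, the minimizer is the degenerate restriction of $p$ to the extreme lengths, a limiting case of the same family.

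Finally, the interpretation follows from $\mathcal{C}$ being exactly the set of distributions calibrated to the length budget. Since $\pi^\star$ minimizes $\mathbb{D}_{\mathrm{KL}}(\cdot\|\tilde p_{\text{target}})$ over $\mathcal{C}$, it is by definition the I-projection, i.e. the least-distorting length-calibrated modification of the target.
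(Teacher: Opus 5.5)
Your proof is correct, but it takes a different route from the paper's. The paper writes the Lagrangian $\sum_y \pi(y)\log\frac{\pi(y)}{p_\alpha(y)}+\eta\bigl(\sum_y\pi(y)-1\bigr)+\lambda\bigl(\sum_y\pi(y)L(y)-\ell_q\bigr)$, reads off $\pi(y)\propto p_\alpha(y)e^{-\lambda L(y)}$ from stationarity in $\pi(y)$, and cites strict convexity of KL for uniqueness. That is a necessary-condition derivation, and it leaves two things implicit. First, that the stationary point is a global minimizer; this follows from convex KKT sufficiency on the finite space given by $L_{\max}<\infty$. Second, that some multiplier actually meets the prescribed budget; the appendix only pins $\lambda_q$ through the normalization condition and then \emph{defines} $\ell_q$ as the induced equilibrium length. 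You instead posit the exponential tilt and certify it with the exact Pythagorean identity $\mathbb{D}_{\mathrm{KL}}(\pi\|p)=\mathbb{D}_{\mathrm{KL}}(\pi\|\pi_\lambda)-\lambda\ell_q-\log M(\lambda)$ on the feasible set. This gives global optimality and uniqueness at once, with no differentiation on a possibly infinite simplex and no a priori existence of a minimizer. You also supply the missing attainability step via the strictly decreasing mean map $\lambda\mapsto\mathbb{E}_{\pi_\lambda}[|y|]$. That includes the honest caveat that at the extreme lengths no finite $\lambda_q$ works and the minimizer is a degenerate limit of the family, which the stated proposition glosses over. What the paper's route buys is that it derives the form rather than presupposing it. It also exhibits $\lambda_q$ directly as the Lagrange multiplier of the length constraint, which is the interpretation it later uses to identify $\log Z'_\phi(q)$ with the I-projection multiplier. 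One minor slip in your write-up: $\{\lambda: M(\lambda)<\infty\}$ need not be open in general, since it may contain its left endpoint. Under the bounded-length assumption you adopt it is all of $\mathbb{R}$, so nothing depends on it.
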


\begin{proposition}[Global KL distortion is second-order in $\lambda_q$]
\label{prop:kl-bound}
Assume the moment generating function $M_q(t):=\mathbb{E}_{y\sim\tilde p_{\text{target}}(\cdot|q)}\bigl[e^{t|y|}\bigr]$ is finite on $|t|<\delta$ for some $\delta>0$, and let $A_q(\lambda):=\log M_q(-\lambda)$ denote the cumulant generating function of $|y|$ under $\tilde p_{\text{target}}(\cdot|q)$, so that $A_q$ is real-analytic on $(-\delta,\delta)$ with $A_q''(0)=\mathrm{Var}_{\tilde p_{\text{target}}}(|y|)$. The global LA-TB distortion admits the exact identity $\mathbb{D}_{\mathrm{KL}}(\pi^*\,\|\,\tilde p_{\text{target}}) = \lambda_q\,A_q'(\lambda_q) - A_q(\lambda_q)$, and the Taylor expansion of $A_q$ at $0$ yields
\begin{equation}
\label{eq:kl-second-order}
\mathbb{D}_{\mathrm{KL}}\!\bigl(\pi^*(\cdot|q)\,\big\|\,\tilde p_{\text{target}}(\cdot|q)\bigr) \;=\; \frac{\lambda_q^{2}}{2}\,\mathrm{Var}_{\tilde p_{\text{target}}}(|y|) \;+\; O\!\bigl(|\lambda_q|^{3}\bigr).
\end{equation}
The global distortion induced by LA-TB is therefore controlled to second order by the target's response-length variance under the learned multiplier $\lambda_q$, and vanishes in the small-$\lambda_q$ regime where the base length statistics are already close to the calibrated budget.
\end{proposition}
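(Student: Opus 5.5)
The plan is to treat $\pi^*$ as an exponential family in the single statistic $|y|$ and compute the KL divergence in closed form. I first normalise the target: write $p(y):=\tilde p_{\text{target}}(y|q)/\sum_{y'}\tilde p_{\text{target}}(y'|q)$. The divergence in the statement should be read with this normalised $p$, since an unnormalised reference only shifts the KL by the constant $\log Z(q)$, as in Eq.~\eqref{eq:kl_decomp}. I take $M_q$ and $A_q$ to be defined under this normalised law. By Eq.~\eqref{eq:length_normalized_dist} and the definition $\lambda_q=\log Z'_\phi(q)$,
\begin{equation*}
\pi^*(y|q)=\frac{p(y)\,e^{-\lambda_q|y|}}{M_q(-\lambda_q)}=p(y)\,\exp\bigl(-\lambda_q|y|-A_q(\lambda_q)\bigr),
\end{equation*}
which is well defined whenever $|\lambda_q|<\delta$. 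Hence $\log(\pi^*/p)=-\lambda_q|y|-A_q(\lambda_q)$ pointwise.

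Taking the expectation under $\pi^*$ gives
\begin{equation*}
\mathbb{D}_{\mathrm{KL}}(\pi^*\|p)=-\lambda_q\,\mathbb{E}_{\pi^*}[|y|]-A_q(\lambda_q).
\end{equation*}
It remains to identify the tilted mean. Since $M_q$ is finite on $(-\delta,\delta)$, differentiating under the sum is justified: on any compact subinterval, $|y|e^{t|y|}$ is dominated by a multiple of $e^{t'|y|}$ with $|t'|<\delta$. Using $A_q(\lambda)=\log M_q(-\lambda)$, this gives $A_q'(\lambda)=-\mathbb{E}_{p}[|y|e^{-\lambda|y|}]/M_q(-\lambda)=-\mathbb{E}_{\pi^*_\lambda}[|y|]$. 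Substituting yields $\mathbb{D}_{\mathrm{KL}}=\lambda_q A_q'(\lambda_q)-A_q(\lambda_q)$.

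A sign check is needed here. With $A_q(\lambda)=\log M_q(-\lambda)$ as defined in the statement, one has $A_q'(0)=-\mathbb{E}_p|y|$ and $A_q''(0)=+\mathrm{Var}_p(|y|)$. Both the identity and the variance claim then hold as stated, so the statement's sign convention is consistent. This bookkeeping is the step most likely to go wrong, so I will verify it explicitly.

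For the expansion, real-analyticity of $A_q$ on $(-\delta,\delta)$ follows because the mgf of a distribution is analytic in the interior of its domain of finiteness and is strictly positive there. Set $f(\lambda):=\lambda A_q'(\lambda)-A_q(\lambda)$. Then $f(0)=-A_q(0)=-\log 1=0$ and $f'(\lambda)=\lambda A_q''(\lambda)$, so $f'(0)=0$ and $f''(0)=A_q''(0)=\mathrm{Var}_p(|y|)$. Taylor's theorem with a remainder bounded on a compact neighbourhood of $0$ then gives $f(\lambda_q)=\tfrac{\lambda_q^2}{2}\mathrm{Var}_p(|y|)+O(|\lambda_q|^3)$, which is Eq.~\eqref{eq:kl-second-order}.

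The final claim, that the distortion vanishes as $\lambda_q\to0$, is immediate from this expansion. It can be interpreted using Proposition~\ref{prop:i-projection}: a small $\lambda_q$ corresponds to a budget $\ell_q=-A_q'(\lambda_q)$ close to the target's own mean length.
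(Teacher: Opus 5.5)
Your proposal is correct and follows essentially the same route as the paper: write $\log(\pi^*/p)=-\lambda_q|y|-A_q(\lambda_q)$, identify $\mathbb{E}_{\pi^*}[|y|]=-A_q'(\lambda_q)$ via the cumulant generating function, and Taylor-expand at $0$. Your $f'(\lambda)=\lambda A_q''(\lambda)$ is exactly the paper's identity $\frac{d}{d\lambda}\mathbb{D}_{\mathrm{KL}}=\lambda\,\mathrm{Var}_{\pi_\lambda}(L)$. The only difference is that you justify differentiating under the sum by a domination argument from the stated MGF hypothesis, whereas the paper assumes a finite maximum length $L_{\max}$, which makes $A_q$ entire and the interchanges trivial.
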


Together, Propositions~\ref{prop:i-projection} and~\ref{prop:kl-bound} establish LA-TB \emph{not} as a heuristic reward shaping, but as the minimum-distortion length correction whose global divergence from the ideal target is quantitatively controlled at second order in $\lambda_q$, with $\ell_q := \mathbb{E}_{\pi^*}[|y|]$ taken to be the induced equilibrium length (see Appendix~\ref{apd:la-tb-analysis} for the precise identification). An exact within-shell preservation result (the mode ranking of $\tilde p_{\text{target}}$ inside every fixed-length shell is preserved exactly), the exact pairwise log-margin formula, a top-$K$ preservation corollary, and all proofs are deferred to Appendix~\ref{apd:la-tb-analysis}. Empirically, on a trained Qwen2.5-Math-1.5B we measure the pairwise inversion rate of LA-TB relative to the ideal $\alpha$-power target and find $\mathrm{IR}\approx 0.09$, i.e., roughly $91\%$ of $\alpha$-power rankings are preserved in practice, in line with the structural guarantees above.
By operating within the space of amortized geometric mean probabilities, PowerFlow prioritizes semantic quality over sequence brevity or redundancy, effectively shifting the optimization focus toward the model’s true latent capability space.

Finally, we instantiate the target as the $\alpha$-power distribution, $p_{\text{base}}(y|q)^\alpha$. To ensure instruction-following integrity and logical structure, we incorporate a format penalty $\psi(y)$. Specifically, $\psi(y)$ is set to a negative constant (e.g., $-0.5$) if the output fails to match a predefined pattern (e.g., the absence of \texttt{\textbackslash boxed\{\}}), and $0$ otherwise. This yields the final PowerFlow objective:
\begin{equation}
\label{eq:powerflow}
\begin{aligned}
    \mathcal{L}_{\text{PowerFlow}} = w \,\cdot &\Bigg( \log Z'_{\phi} (q) + \frac{1}{|y|} \log \pi_{\theta} (y|q) \\
    &- \alpha \left[ \frac{1}{|y|} \log p_{\text{base}}(y|q) + \psi(y) \right] \Bigg)^2
\end{aligned}
\end{equation}
where $w$ is the importance sampling ratio defined as:
\begin{equation}
\label{eq:w}
w = \text{clip} \left( \frac{\pi_{\theta}(y|q)}{\pi_{\text{old}}(y|q)}, 1-\epsilon, 1+\epsilon \right)^{\text{detach}}.
\end{equation}
The inclusion of $w$ ensures compatibility with off-policy fine-tuning, where trajectories are sampled from a behavior policy $\pi_{\text{old}}$. Following~\citet{zhu2025flowrl}, we apply clipping to maintain training stability and prevent gradient collapse during iterative optimization. As evidenced by the robust training dynamics in Figure~\ref{fig:comparison}, the PowerFlow objective effectively circumvents these structural distortions, achieving sustained length stability and monotonic performance gains by preserving the principled $\alpha$-power density on a length-normalized surface.

\section{Experiments}

In this section, we evaluate the effectiveness of PowerFlow across two primary domains: complex logical reasoning and diverse creative writing. Following the experimental setup detailed in Section~\ref{sec:exp_setup}, we present our findings in two parts. First, Section~\ref{sec:reasoning} demonstrates that distribution sharpening ($\alpha > 1$) effectively intensifies reasoning performance across various model variants. Subsequently, Section~\ref{sec:creativity} reveals that distribution flattening ($\alpha < 1$) restores the generative diversity typically suppressed in aligned models while simultaneously improving output quality. Together, these experiments illustrate that principled distribution matching serves as a robust mechanism for the directional elicitation of latent LLM capabilities without external supervision.

\subsection{Experimental Setup}
\label{sec:exp_setup}

\textbf{Data and Training Configuration.} \,
For reasoning tasks, we follow standard practices in the community~\cite{openr1, zhangonline} by utilizing questions from the NuminaMath-CoT dataset~\cite{numina_math_datasets} for unsupervised training. Specifically, we employ a subset of 18,000 queries filtered by \citet{empo} to exclude instances with excessive response length or potential answer leakage. Each query is appended with a prompt instructing the model to ``think step by step'' and provide the final answer within a \texttt{\textbackslash boxed\{\}} environment. For creative writing tasks covering poem continuation, story generation, and joke writing~\cite{lu2025ai}, we select a training set of $300$ prompts, drawn from the $500$-prompt collection curated by \citet{zhang2025verbalized} and sourced from PoemHunter.com, BookMIA~\cite{shi2024detecting}, and Reddit r/DadJokes~\cite{dadjokes}. All inputs are formatted using the models' official chat templates; detailed prompts and hyperparameters are provided in Appendix~\ref{apd:exp_prompt} and Appendix~\ref{apd:hyperparameters}, respectively.

\textbf{Models and Baselines.} \,
To evaluate the generalizability of PowerFlow, we conduct experiments across several representative model families and scales, including the 1.5B, 3B, and 7B variants of the Qwen2.5 series, the domain-specific Qwen2.5-Math series, and the Llama-3.2 series.

For reasoning benchmarks, we compare PowerFlow against a comprehensive suite of baselines:
(i) \textit{Base}: The original un-finetuned model. 
(ii) \textit{Low-temp}: The \textit{Base} model using a reduced inference temperature ($T' = T/\alpha$) following \citet{karan2025reasoning}. 
(iii) \textit{Instruct}: The official instruction-tuned version of the respective base model. 
(iv) \textit{Format-only}: A variant of our framework using $\alpha=1$, isolating the performance gains solely from improved answer extractability via the format penalty.
(v) RLIF Methods: State-of-the-art unsupervised methods including \textit{Intuitor} (self-certainty rewards)~\cite{rlif}, \textit{EMPO} (semantic entropy rewards)~\cite{empo}, and \textit{TTRL} (majority-voting rewards)~\cite{ttrl}. 
(vi) \textit{PowerSampling}: An inference-time baseline that samples from the $\alpha$-power distribution using Markov Chain Monte Carlo (MCMC) methods~\cite{karan2025reasoning}. 
(vii) \textit{One-shot EM}: A method that directly minimizes token-level entropy via unsupervised optimization~\cite{gao2025one}. 
(viii) \textit{GRPO}: Group Relative Policy Optimization~\cite{shao2024deepseekmath} trained on the NuminaMath-CoT dataset with external verifiable rewards, representing a supervised counterpart. For GRPO, we follow the DAPO~\cite{yu2026dapo} training paradigm and verl's~\cite{sheng2025hybridflow} recipe, using an answer-verification reward that checks whether the final answer inside \texttt{\textbackslash boxed\{\}} matches the ground truth, together with asymmetric Clip-Higher to encourage exploration (which we retain when training PowerFlow as well).
For reproducibility, we utilize the official open-source checkpoints for RLIF baselines, given the substantial computational overhead of full-scale fine-tuning. We discuss this choice in Appendix~\ref{apd:further}.
Our training setup mirrors the EMPO recipe established in~\citet{empo}, which ensures a fair comparison against the released EMPO checkpoints.

For creative writing tasks, we compare PowerFlow ($\alpha=0.5$) with:
(i) \textit{Instruct}: The default instruct-tuned model. 
(ii) \textit{Base}: The original un-finetuned (pre-trained) model.
(iii) \textit{High-temp}: The \textit{Instruct} model with an increased sampling temperature. 
(iv) \textit{VS-Standard}: The ``Verbalized Sampling'' method~\cite{zhang2025verbalized}, which improves diversity by adjusting prompts to elicit latent expressive variety.

\textbf{Evaluation.} \,
For reasoning tasks, we report the mean accuracy across 16 independent samples (avg@16) per problem, with sampling temperature and top-$p$ fixed at 1.0. Our evaluation spans various mathematical reasoning benchmarks, including MATH500~\cite{hendrycks2021measuring}, OlympiadBench~\cite{he2024olympiadbench}, AIME24~\cite{numina_math_datasets}, AIME25~\cite{aime25}, and AMC23~\cite{numina_math_datasets}. We also include GPQA (diamond)~\cite{rein2024gpqa} to assess natural scientific reasoning. Following \citet{zeng2025simplerl}, we extract answers within \texttt{\textbackslash boxed\{\}} tags and employ a unified script to determine mathematical equivalence.

To assess creative writing, we utilize the remaining $200$ prompts. Per prompt, we sample $30$ independent responses at a temperature of $0.8$ (excluding the \textit{High-temp} baseline at $1.0$) to ensure a robust estimation of semantic diversity and quality. To maintain consistency across phases, both training and evaluation employ official chat templates with concise, generation-restricting instructions. These constraints are designed to minimize conversational overhead, preventing distortion of probability calculations and ensuring that the computed densities strictly reflect the creative generation.

We quantify performance using three metrics: 
(i) \textit{Semantic Diversity}: Following established practice~\cite{shaib2024standardizing, cann2023using, lu2025ai}, defined as $1 - \bar{s}$, where $\bar{s}$ is the mean pairwise cosine similarity of response embeddings from \texttt{bge-small-en-v1.5}~\cite{bge_embedding}.
(ii) \textit{Lexical Redundancy}: Measured by ROUGE-L scores among outputs following \citet{shaib2024standardizing}, where lower scores indicate higher variety.
(iii) \textit{Quality}: Evaluated via LLM-as-a-judge using Qwen3-plus, scoring responses based on rubrics from Creative Writing v3~\cite{paech2025creativewritingv3} and HumorBench~\cite{narad2025llms} (see Appendix~\ref{apd:creativity_prompt} for details).

\begin{table*}[t]
\centering
\caption{Model Performance Comparison (avg@16) across Benchmarks. Within each model block, horizontal lines separate unsupervised methods (top: \textit{PowerFlow} and RLIF baselines) from reference models and supervised counterparts (bottom: \textit{Instruct} and \textit{GRPO}).}

\resizebox{0.88\textwidth}{!}{ 
\begin{tabular}{lcccccc >{\centering\arraybackslash}p{2cm}} 
\toprule
 & MATH500 & Olympiad & AIME24 & AIME25 & AMC23 & GPQA & \textbf{Average} \\ 
\midrule
\textbf{Qwen2.5-1.5B} & & & & & & & \\
\qquad Base & 6.20 & 1.90 & 0.00 & 0.00 & 1.40 & 25.80 & 5.88 \\
\qquad Low-temp & 18.60 & 4.90 & 0.80 & 0.40 & 7.50 & 26.60 & 9.80 \\
\qquad Intuitor & 47.40 & 15.30 & 1.50 & 0.80 & 22.30 & 26.40 & 18.95 \\
\qquad \textbf{PowerFlow (Ours)} & 49.30 & 16.00 & 0.80 & 1.50 & 23.80 & 27.70 & \textbf{19.85} \\
\midrule[0.2pt]
\qquad Instruct & 34.90 & 10.00 & 0.80 & 0.00 & 13.60 & 28.00 & 14.55 \\
\qquad GRPO & 45.40 & 14.10 & 1.00 & 0.40 & 21.90 & 26.00 & 18.13 \\
\midrule
\textbf{Qwen2.5-Math-1.5B} & & & & & & & \\
\qquad Base & 43.30 & 20.90 & 4.60 & 1.90 & 28.40 & 26.10 & 20.87 \\
\qquad Low-temp & 62.90 & 30.10 & 9.40 & 4.00 & 49.50 & 28.70 & 30.77 \\
\qquad Format-only & 65.70 & 30.10 & 5.60 & 5.00 & 47.0 & 26.10 & 29.92 \\
\qquad EMPO & 69.90 & 32.20 & 12.30 & 4.60 & 46.20 & 29.50 & 32.45 \\
\qquad \textbf{PowerFlow (Ours)} & 70.90 & 32.50 & 10.80 & 10.00 & 53.30 & 28.30 & \textbf{34.30} \\
\midrule[0.2pt]
\qquad Instruct & 71.50 & 33.50 & 10.20 & 6.00 & 49.40 & 26.40 & 32.83 \\
\qquad GRPO & 71.40 & 34.00 & 8.10 & 6.70 & 49.50 & 26.80 & 32.75 \\
\midrule
\textbf{Llama-3.2-3B-Instruct} & & & & & & & \\
\qquad Base & 40.10 & 10.30 & 4.00 & 0.00 & 18.80 & 29.50 & 17.12 \\
\qquad Low-temp & 50.00 & 17.50 & 9.60 & 0.60 & 24.80 & 28.80 & 21.88 \\
\qquad Intuitor & 50.40 & 16.60 & 9.20 & 0.20 & 27.30 & 30.50 & 22.37 \\
\qquad \textbf{PowerFlow (Ours)} & 50.60 & 16.60 & 10.70 & 0.40 & 28.80 & 30.20 & \textbf{22.88} \\
\midrule[0.2pt]
\qquad GRPO & 50.10 & 17.20 & 11.20 & 0.00 & 25.00 & 30.50 & 22.33 \\
\midrule
\textbf{Qwen2.5-Math-7B} & & & & & & & \\
\qquad Base & 46.70 & 22.30 & 12.30 & 4.20 & 34.50 & 29.70 & 24.95 \\
\qquad Low-temp & 69.00 & 34.00 & 23.10 & 7.70 & 47.70 & 34.10 & 35.93 \\
\qquad PowerSampling & 72.20 & 39.50 & 23.30 & 10.00 & 57.50 & 36.30 & 39.80 \\
\qquad TTRL & 80.40 & 39.60 & 21.70 & 11.90 & 58.80 & 34.70 & 41.18 \\
\qquad One-shot EM & 61.40 & 29.80 & 18.10 & 6.20 & 48.90 & 32.80 & 32.87 \\
\qquad EMPO & 79.30 & 41.70 & 15.80 & 12.30 & 60.20 & 36.00 & 40.88 \\
\qquad \textbf{PowerFlow (Ours)} & 78.10 & 40.10 & 20.00 & 14.40 & 63.40 & 37.00 & \textbf{42.17} \\
\midrule[0.2pt]
\qquad Instruct & 74.50 & 31.20 & 12.50 & 12.30 & 65.90 & 35.00 & 38.57 \\
\qquad GRPO & 78.40 & 42.50 & 22.70 & 12.90 & 63.40 & 34.40 & \textbf{42.38} \\
\qquad Qwen2.5-32B-Instruct & 79.70 & 43.10 & 13.10 & 9.40 & 60.30 & 44.10 & 41.62 \\
\bottomrule
\end{tabular}
}
\label{tab:reasoning_results}
\end{table*}

\subsection{Eliciting Reasoning via Distribution Sharpening}
\label{sec:reasoning}
This section evaluates the efficacy of PowerFlow in eliciting latent reasoning capabilities through distribution sharpening ($\alpha > 1$). Based on preliminary tuning on Qwen2.5-Math-1.5B (detailed in Appendix~\ref{apd:alpha_sensitivity}), we adopt a default sharpening parameter of $\alpha=4$ for base models, aligning with the optimal configurations identified by \citet{karan2025reasoning}. Notably, for instruction-tuned models, we find that a lower value of $\alpha=2$ is more effective; this is because these models have already undergone distribution sharpening during the alignment process, requiring less intensification to reach peak performance. To prioritize assessing our framework's generalizability across diverse architectures and scales, we adopt these values as robust defaults rather than pursuing exhaustive per-model optimization. While individual models possess unique intrinsic entropy profiles that may benefit from specialized $\alpha$-schedules, we leave the development of automated adjustment mechanisms for future work.

\subsubsection{Main Performance Results}

Table~\ref{tab:reasoning_results} summarizes the performance of PowerFlow across various model series and reasoning benchmarks. The results indicate that PowerFlow yields significant performance improvements that far exceed the gains achievable through simple temperature scaling (Low-temp), standard instruction tuning (Instruct) or targeted format regularization (Format-only). By performing principled distribution sharpening, PowerFlow effectively surfaces the latent reasoning capabilities inherent in the base models, consistently surpassing existing RLIF methods across all tested scales.

Notably, our unsupervised approach outperforms the supervised GRPO on three configurations\,---\,Qwen2.5-1.5B ($19.85_{\pm0.28}$ vs.\ $18.13_{\pm0.29}$), Qwen2.5-Math-1.5B ($34.30_{\pm0.30}$ vs.\ $32.75_{\pm0.31}$), and Llama-3.2-3B-Instruct ($22.88_{\pm0.33}$ vs.\ $22.33_{\pm0.33}$)\,---\,while achieving comparable results on Qwen2.5-Math-7B ($42.17_{\pm0.27}$ vs.\ $42.38_{\pm0.38}$). Subscripts denote one standard error of the mean (SEM) over the Average column, estimated by bootstrap over the $16$ samples per problem; all improvements we claim over GRPO exceed $1\sigma$. These findings suggest that sharpening the distribution while preserving the structural integrity of the initial model provides a competitive elicitation mechanism that can rival, or even surpass, traditional reward-driven RL methods without the need for external labels or verifiers. We further analyze the underlying elicitation mechanism and provide the comparative Pass@$n$ performance in Appendix~\ref{apd:mechanism}.

\subsubsection{Preservation of Solution Diversity}

\begin{figure}[htbp]
    \centering
    \includegraphics[width=0.95\linewidth]{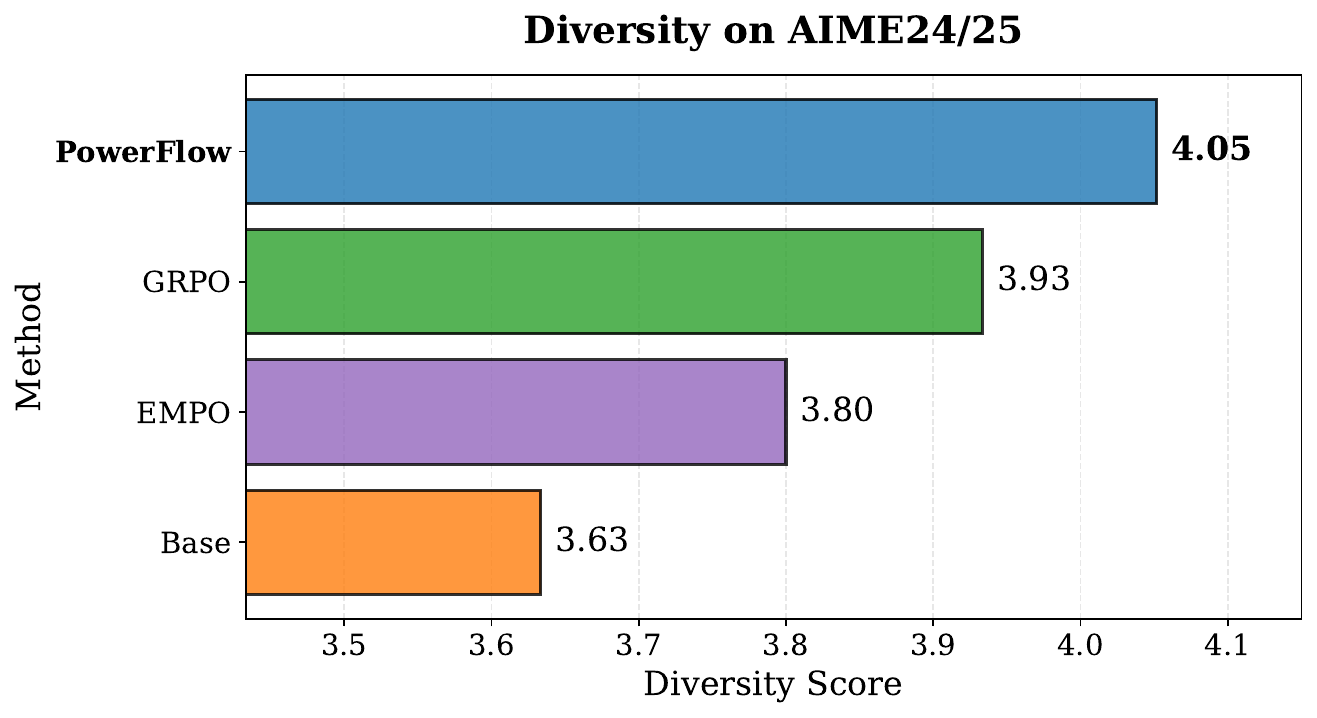}
    \caption{Comparison of solution diversity scores on AIME24/25. PowerFlow maintains superior strategy variety.}
    \label{fig:diversity}
\end{figure}

Recent studies highlight the potential for models to converge toward less diverse output patterns during RLIF~\cite{zhang2025nofree, ghimire2026prism, zuo2026how}. To investigate if PowerFlow inherits this vulnerability, we assess the diversity of reasoning paths on the AIME24 and AIME25 benchmarks. Using DeepSeek-V3.2~\cite{liu2025deepseek} as an LLM-as-a-judge, we evaluate distinct solution strategies across $16$ independent samples per problem, following the rubric in Appendix~\ref{prompt:reasoning_diversity}.

As shown in Figure~\ref{fig:diversity}, PowerFlow achieves the highest diversity score ($4.05$), notably outperforming both EMPO ($3.80$) and supervised GRPO ($3.93$). This preservation of variety stems from the mathematical nature of the $\alpha$-power distribution matching objective. Unlike traditional RL objectives that may prioritize the exploitation of a single high-reward trajectory, the $\alpha$-power transformation rescales the entire density surface while strictly maintaining the relative rankings and multi-modal structure of the base model. Consequently, PowerFlow intensifies the probability of correct reasoning paths across the entire latent landscape, allowing the model to elicit a broad spectrum of viable strategies rather than collapsing into a monolithic output pattern.

\subsection{Restoring Creativity via Distribution Flattening}
\label{sec:creativity}
We further investigate the effectiveness of distribution flattening ($\alpha = 0.5$) in restoring the creative diversity of aligned models, which often suffer from mode collapse during instruction tuning. Figure~\ref{fig:creativity_semantic} illustrates the averaged performance across four model families, mapping the Pareto frontier between generation quality and semantic diversity. Additional results regarding lexical redundancy and per-task breakdowns are provided in Appendix~\ref{apd:creativity_lexical} and Table~\ref{tab:creative_num}.

As shown in Figure~\ref{fig:creativity_semantic}, the Instruct models (squares) maintain high quality but exhibit severely restricted diversity. This aligns with prior observations of ``typicality bias,'' where alignment suppresses high-quality but atypical semantic paths in favor of predictable responses. Conversely, while \textit{Base} models (circles) possess high latent creativity, their excessive entropy and poor instruction-following frequently lead to incoherent outputs, preventing the realization of their full generative potential. Other baselines fail to bridge this gap effectively: increasing the sampling temperature (High-temp, diamonds) improves diversity only at the expense of quality, while VS-Standard (triangles) degrades quality on models at the 7B scale or smaller due to its reliance on advanced instruction-following capabilities.

\begin{figure}[htbp]
    \centering
    \includegraphics[width=0.95\linewidth]{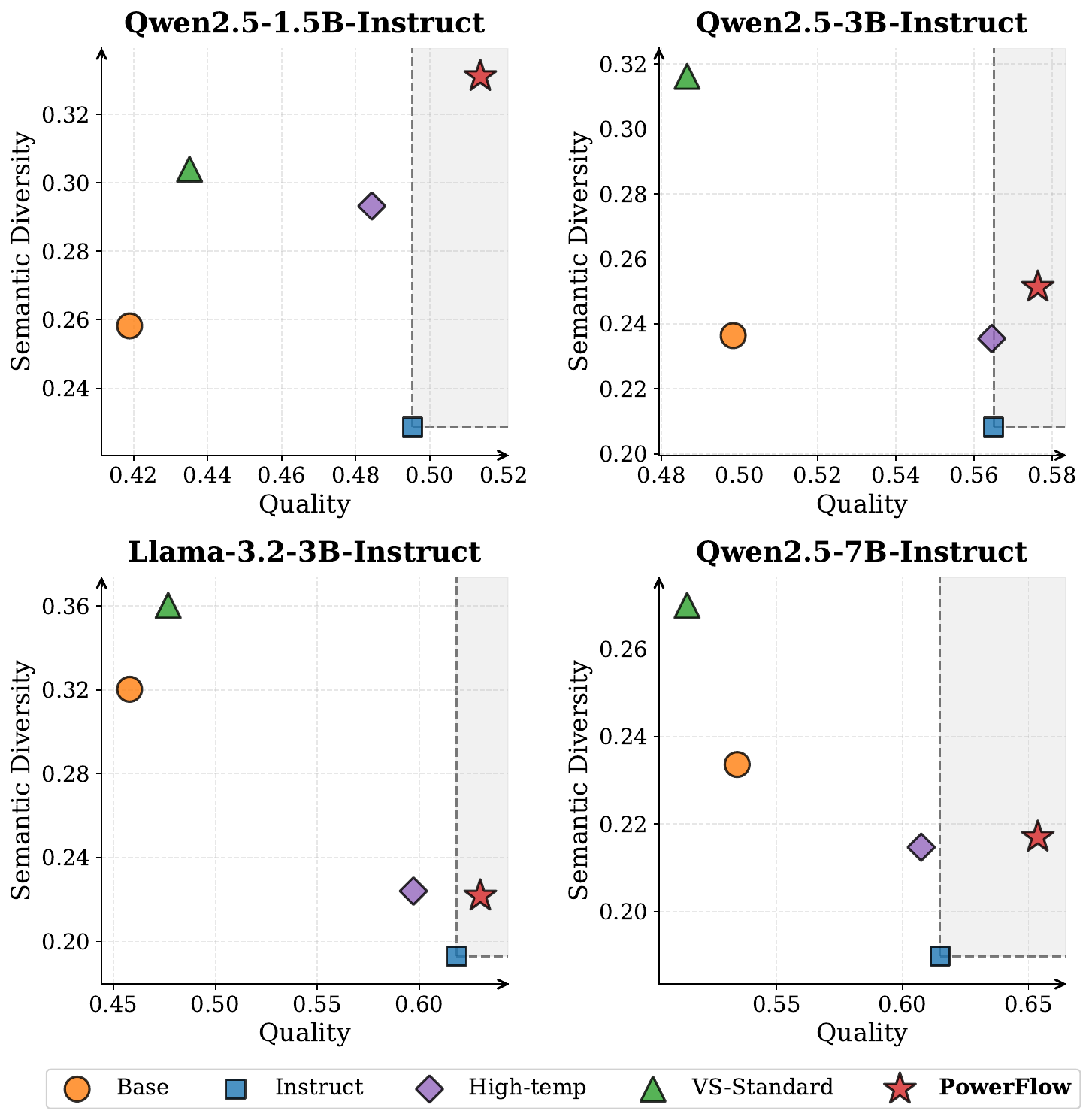}
    \caption{Quality vs. Semantic Diversity on creative writing tasks. The shaded region indicates the area of Pareto improvement relative to the \textit{Instruct} baseline. PowerFlow (stars) consistently shifts the Pareto frontier outward across all model scales.}
    \vskip -10pt
    \label{fig:creativity_semantic}
\end{figure}

In contrast, PowerFlow (stars) amortizes the flattening of the energy surface to facilitate the exploration of high-quality, unconventional linguistic paths frequently suppressed by aligned distributions. By achieving superior semantic diversity and reduced lexical redundancy while simultaneously surpassing original \textit{Instruct} models in quality, PowerFlow effects a Pareto-dominant shift. Notably, it stands as the only evaluated method capable of enhancing generation quality, demonstrating a successful preservation of robust instruction-following while releasing latent expressive potential. The resulting ``best-of-both-worlds'' synergy ensures the model remains highly controllable and stylistically sophisticated, yet retains the structural variety and creative entropy inherent in the pre-trained distribution. Such outcomes confirm that distribution flattening serves as a principled and effective mechanism for revitalizing the creative potential of aligned large language models.

\section{Conclusion and Discussion}

In this work, we introduced PowerFlow, a principled framework for unsupervised capability elicitation in LLMs. By framing fine-tuning as a distribution-matching problem, we demonstrated that the latent capabilities of LLMs can be directionally activated without external labels or verifiable rewards. Our results demonstrate that distribution sharpening ($\alpha > 1$) significantly enhances logical reasoning, rivaling or even surpassing supervised methods such as GRPO, while distribution flattening ($\alpha < 1$) restores the creative diversity frequently suppressed during standard instruction tuning. The success of PowerFlow is fundamentally rooted in its length-aware Trajectory Balance objective, which neutralizes the inherent length bias of autoregressive generation and ensures stable, non-degenerative optimization.

Beyond empirical gains, PowerFlow provides a robust diagnostic and optimization framework for investigating the distributional geometry of LLMs. Our findings suggest that pre-trained models possess remarkably sophisticated structural integrity, where RL-based fine-tuning often focuses more on optimizing distribution shapes than introducing novel knowledge. By shifting from heuristic reward engineering toward explicit distribution matching, we offer a transparent methodology generalizable beyond $\alpha$-power transformations to diverse distribution families. This paves the way for a unified unsupervised alignment paradigm where task-specific elicitation occurs by matching models to optimized target geometries. Future research on automating the discovery of these shapes and schedules could lead to a more efficient, theoretically grounded approach to developing versatile, specialized, and creative AI agents.

\section*{Acknowledgements}
This work was supported by the National Natural Science Foundation of China Grant 52494974.

\section*{Impact Statement}

This work presents a principled framework for the unsupervised elicitation of latent capabilities within Large Language Models (LLMs). By enabling the directional activation of reasoning and creativity through principled distribution matching, our framework mitigates the reliance on cost-intensive human-labeled datasets and external verifiable rewards, thereby democratizing the development of specialized and high-performing AI agents.

However, the capacity to manipulate a model's latent distributional modes warrants careful ethical consideration.. Distribution flattening, while intended to restore generative diversity, may inadvertently resurface or amplify harmful content previously suppressed through safety alignment protocols like RLHF. Conversely, aggressive sharpening could prioritize biased or suboptimal reasoning paths if the induced target distribution grants undue prominence to such modes. We therefore emphasize that the deployment of this framework should be fortified by robust safety guardrails and rigorous red-teaming to ensure that elicited capabilities remain socially beneficial and ethically grounded.

\bibliography{refs}
\bibliographystyle{icml2026}

\newpage
\appendix
\onecolumn

\section{Extended Related Works}
\label{apd:extended_related_works}

\textbf{Reinforcement Learning from Internal Feedback (RLIF).} \, 
RLIF was pioneered by~\cite{rlif} to facilitate unsupervised elicitation of reasoning capabilities by substituting external rewards---typically derived from pre-trained reward models or verifiers---with intrinsic feedback signals. This paradigm has catalyzed extensive research into diverse intrinsic reward mechanisms, including self-certainty~\cite{rlif}, token-level entropy~\cite{rent}, generation probabilities~\cite{rlsc}, semantic entropy~\cite{empo}, and majority voting~\cite{ttrl,srt}.

Despite their empirical successes, most existing RLIF methods rely on handcrafted rewards that heuristically guide optimization without a principled target distribution. This lack of theoretical grounding leaves them vulnerable to the inherent biases of reward design, particularly in the later stages of policy optimization. For instance, entropy-based RLIF has been shown to induce overconfidence and performance degradation as training progresses, while exhibiting minimal impact on low-entropy instruct-tuned models~\cite{zhang2025nofree}. Recent work further reveals that such internal-confidence-based methods may trigger the generation of inappropriate, repetitive, and predictable patterns as a shortcut to artificially reduce entropy~\cite{ghimire2026prism,zuo2026how}. Similarly, majority-voting rewards are susceptible to reward hacking, where models generate high-entropy, stochastic chains-of-thought to arrive at consistent but irrelevant answers~\cite{srt}. Furthermore, probability-based rewards are prone to length collapse as models exploit the higher joint probabilities of shorter sequences~\cite{rlif,zuo2026how}. In contrast, PowerFlow eliminates heuristic reward biases by matching the principled $\alpha$-power distribution, while its length-aware objective prevents degenerative length collapse.

\textbf{The Distribution Sharpening Mechanism.} \, 
Following the remarkable success of Reinforcement Learning from Verifiable Rewards (RLVR) in enhancing LLM reasoning~\cite{shao2024deepseekmath, DeepSeekAI2025DeepSeekR1IR}, significant effort has been devoted to understanding its underlying mechanisms. Critical insights from prior work~\cite{yue2025does, shao2025spurious} suggest that current RLVR training may not necessarily elicit entirely new reasoning patterns beyond the base model's intrinsic capacity. Instead, it operates through ``distribution sharpening,'' a process of amplifying specific internalized reasoning paths to improve sample efficiency at low sampling rates (e.g., pass@1). While recent theoretical works have begun to analyze RLIF methods through this  lens~\cite{huang2025selfimprovement, zuo2026how}, they remain largely observational, lacking efficient practical sharpening methods and overlooking the structural length bias inherent in $\alpha$-power distributions. Notably, \citet{karan2025reasoning} demonstrated that MCMC sampling from the sharpened $\alpha$-power distribution yields outstanding performance; however, its prohibitive inference cost and complex sampling logic render it impractical for general deployment. In contrast, PowerFlow amortizes the computational cost of distribution matching into the training phase, eliciting either logical reasoning via sharpening or expressive creativity via flattening under standard decoding at inference time.

\section{Implementation Details}
\label{apd:implementation}

In this section, we elaborate on the implementation details of the PowerFlow framework, focusing on the architecture of the partition function estimator and the specific configurations used to ensure stable and diverse distribution matching.

\subsection{Log-Partition Function Estimator ($\log Z'$ Module)}
\label{apd:logZ}

To operationalize the length-aware Trajectory-Balance objective, we employ a projection network, \texttt{ProjZModule}, which provides an amortized estimation of the log-partition function $\log Z'(q)$. It is crucial to note that the module is designed to output $\log Z'$ directly rather than the partition function $Z'$, which enhances numerical stability during training.

\paragraph{Architecture.} The architecture is inspired by the projection network in FlowRL~\cite{zhu2025flowrl}, consisting of a 3-layer MLP that maps the hidden states of the final token from the query prefix to a scalar value. We use a hidden dimension consistent with the base model's hidden size, incorporating GELU activations, LayerNorm, and Dropout (0.1) to regularize the learning of the energy surface.

\paragraph{Principled Initialization.} To stabilize the initial phase of optimization and accelerate convergence, we introduce a specialized initialization strategy for the $Z$ module's output. We initialize the log-partition function based on the base model's own statistics: $\log Z'_\phi(q) \approx \bar{\mathcal{P}}_{\text{ref}} \cdot (\alpha - 1) + \epsilon$, where $\bar{\mathcal{P}}_{\text{ref}}$ is the average token-level log-probability sampled from the reference model on the first training batch, and $\epsilon$ is a random noise term with a mean of $0$. This ensures that the estimated log-partition function starts at a magnitude compatible with the target $\alpha$-power density, preventing the large gradient fluctuations that typically occur when the $Z$ module is forced to adapt from a zero-initialized state. This initialization mainly helps early optimization: without it, the initial $\log Z'$ trajectory is more volatile and the warm-up phase takes about $10$ more gradient steps, but final performance is comparable. This is consistent with the I-projection characterization in Proposition~\ref{prop:i-projection}, which depends on the equilibrium $\lambda_q$ rather than its initial value.

\subsection{Training Configuration and Hyperparameters}
\label{apd:hyperparameters}

The effectiveness of PowerFlow relies on a principled exploration of the probability landscape. We employ a substantial sample size per prompt ($N=16$) and conservative learning rates to ensure the model samples the distribution broadly, thereby preserving the structural information of the base distribution and avoiding collapse into local modes. For reasoning tasks where $\alpha > 1$, we utilize a high sampling temperature ($T=1.0$) to facilitate exploration. However, for creativity tasks where $\alpha < 1$, the temperature is adjusted to $0.7$. This modification is necessary because the flattened target distribution already inherently promotes diversity; a more moderate temperature prevents the model from generating incoherent or nonsensical sequences during the later stages of optimization. 

\paragraph{Optimization and Learning Rates.} We use a prompt batch size of $B=128$. The learning rates are tailored to the specific optimization regime and model scale. For the reasoning regime ($\alpha > 1$), we apply a learning rate of $3 \times 10^{-6}$ for 1.5B models and $1 \times 10^{-6}$ for larger variants. In the creativity-focused regime ($\alpha < 1$), we use a more stringent learning rate of $5 \times 10^{-7}$ for all models to maintain control over the flattened distribution.

\paragraph{Clip Higher Strategy.} To further enhance training stability, we implement the Clip Higher mechanism within the importance sampling objective. Specifically, we set an asymmetric clipping threshold with $\epsilon_{\text{high}} = 0.28$ and $\epsilon_{\text{low}} = 0.2$. By allowing a slightly higher upper bound for the importance weights, we facilitate more effective updates from high-quality trajectories while still guarding against the gradient collapse and instability often encountered in off-policy GFlowNet training.

\paragraph{Compute Cost.} PowerFlow introduces a lightweight \texttt{ProjZModule} on top of the GFlowNet sampler and incurs only $\sim$10\% wall-clock overhead per gradient step (e.g., $109.7$s vs.\ $100.1$s for GRPO on Qwen2.5-Math-7B with $8\times$H100 GPUs). At equal learning rate, PowerFlow takes about $2\times$ as many gradient steps to converge; tripling the learning rate brings the step count down to roughly that of GRPO while preserving the training stability shown in Figure~\ref{fig:comparison}.

\section{Experimental Prompts}
\label{apd:exp_prompt}

This appendix provides the specific prompts used for generation and evaluation in our experiments. We categorize these into reasoning-focused and creativity-focused tasks to ensure transparency and reproducibility.

\subsection{Reasoning Prompts}
\label{apd:reasoning_prompt}
The prompts for reasoning tasks include the input templates for model generation and the evaluation criteria used to assess trajectory diversity.

\paragraph{Generation Prompt.} The prompt template is designed to elicit structured step-by-step reasoning while enforcing a specific output format to ensure the final answer is reliably captured by a parser.

\begin{tcolorbox}[colback=gray!5!white, colframe=gray!75!black,title=Mathematical Reasoning Generation Prompt]
\small
\textbf{System:} Please reason step by step, and output your final answer within \textbackslash boxed\{\}.

\textbf{User:} \{Question\} Let's think step by step and output the final answer within \textbackslash boxed\{\}.

\end{tcolorbox}

\begin{tcolorbox}[colback=gray!5!white, colframe=gray!75!black,title=GPQA Reasoning Genaration Prompt]
\small
\textbf{System:} Please reason step by step, and output your final answer (A, B, C, or D) within \textbackslash boxed\{\}.

\textbf{User:} \{Question\} Let's think step by step and output the final answer (A, B, C, or D) within \textbackslash boxed\{\}.

\end{tcolorbox}

\newpage
\paragraph{Diversity Scoring Prompt.} To evaluate the diversity of the generated reasoning trajectories, we utilize a scoring mechanism adapted from the methodology established by \citet{zhu2025flowrl}.

\begin{tcolorbox}[colback=blue!5!white, colframe=blue!75!black,title=Reasoning Diversity Evaluation Rubric,label={prompt:reasoning_diversity}]
\small
\textbf{System:} You are evaluating the DIVERSITY of solution approaches for a mathematics competition problem. Focus on detecting even SUBTLE differences in methodology that indicate different problem-solving strategies.

\textbf{PROBLEM:} \\
\{problem\}

\textbf{16 SOLUTION ATTEMPTS:} \\
\{formatted\_responses\}

\textbf{EVALUATION CRITERIA - Rate diversity from 1 to 5:}

\textbf{Score 1 - Minimal Diversity:}
\begin{itemize}
    \item 14+ responses use essentially identical approaches
    \item Same mathematical setup, same variable choices, same solution path
    \item Only trivial differences (arithmetic, notation, wording)
    \item Indicates very low exploration/diversity in the generation process
\end{itemize}

\textbf{Score 2 - Low Diversity:}
\begin{itemize}
    \item 11-13 responses use the same main approach
    \item 1-2 alternative approaches appear but are rare
    \item Minor variations within the dominant method (different substitutions, orderings)
    \item Some exploration but heavily biased toward one strategy
\end{itemize}

\textbf{Score 3 - Moderate Diversity:}
\begin{itemize}
    \item 7-10 responses use the most common approach
    \item 2-3 distinct alternative approaches present
    \item Noticeable variation in problem setup or mathematical techniques
    \item Balanced mix showing reasonable exploration
\end{itemize}

\textbf{Score 4 - High Diversity:}
\begin{itemize}
    \item 4-6 responses use the most common approach
    \item 3-4 distinct solution strategies well-represented
    \item Multiple mathematical techniques and problem framings
    \item Strong evidence of diverse exploration strategies
\end{itemize}

\textbf{Score 5 - Maximum Diversity:}
\begin{itemize}
    \item No single approach dominates ($\leq$3 responses use same method)
    \item 4+ distinctly different solution strategies
    \item Wide variety of mathematical techniques and creative approaches
    \item Excellent exploration and generation diversity
\end{itemize}

\textbf{IMPORTANT:} Focusing on the DIVERSITY of the attempted approaches. Return ONLY a number from 1 to 5.
\end{tcolorbox}

\newpage
\subsection{Creativity Prompts}
\label{apd:creativity_prompt}
The prompts for our creative writing experiments are divided into two categories: those used for model generation and those used for output assessment.

\paragraph{Model Generation Prompts.} The system and user prompts utilized to elicit creative responses from the models are adapted from \citet{zhang2025verbalized}. These prompts are designed to provide sufficient stylistic constraints.

\begin{tcolorbox}[colback=gray!5!white, colframe=gray!75!black, title=Creative Writing Direct Sampling System Prompt:]
\small
\texttt{Generate a response to the input prompt. The response should be approximately 200 words.}\\
\texttt{Output ONLY the response, with no explanations or extra text.}
\end{tcolorbox}

\begin{tcolorbox}[colback=gray!5!white, colframe=gray!75!black, title=Creative Writing Verbalized Sampling System Prompt:]
\small
\texttt{Generate 5 responses to the input prompt. Each response should be approximately 200 words.}\\

\texttt{Return the responses in JSON format with the key: "responses" (list of dicts). Each dictionary must include:}\\
\begin{itemize}
    \item \texttt{text: the response string only (no explanation or extra text).}
    \item \texttt{probability: the estimated probability from 0.0 to 1.0 of this response given the input prompt (relative to the full distribution).}
\end{itemize}

\texttt{Give ONLY the JSON object, with no explanations or extra text.}
\end{tcolorbox}

\begin{tcolorbox}[colback=gray!5!white, colframe=gray!75!black, title=Example User Prompt - Poem Writing:]
\small
\texttt{Please write a poem inspired by the line: 'Dear love, for nothing less than thee'.}
\end{tcolorbox}

\begin{tcolorbox}[colback=gray!5!white, colframe=gray!75!black, title=Example User Prompt - Story Writing:]
\small
\texttt{Please write a short story starting with the following prompt: The internet and the millions of computers associated with it, the virtual reality that therein exists and that will become more vivid in the years ahead, is not a reality.}
\end{tcolorbox}

\begin{tcolorbox}[colback=gray!5!white, colframe=gray!75!black, title=Example User Prompt - Joke Writing:]
\small
\texttt{Tell me a joke about beer.}
\end{tcolorbox}

\paragraph{Evaluation Prompts.} To assess the quality of the creative outputs, we employ evaluation prompt templates consistent with the criteria defined in Creative Writing v3~\cite{paech2025creativewritingv3} and HumorBench~\cite{narad2025llms}. 

\begin{tcolorbox}[colback=blue!5!white, colframe=blue!75!black, title=Poem/Story Quality Evaluation Rubric (Part 1):]
\small
\texttt{You are an expert in assessing creative writing. Your task is to score the quality of the test model's response below, by several metrics, on a 0-20 scale.}\\

\texttt{[PROMPT START]}\\

\texttt{\{writing\_prompt\}}\\

\texttt{[PROMPT END]}\\

\texttt{[TEST MODEL RESPONSE]}\\

\texttt{\{response\}}\\

\texttt{[TEST MODEL RESPONSE END]}\\

\texttt{[Task]}\\

\texttt{You are an expert in assessing creative writing. Your task is to score the quality of the test model's response below, by several metrics, on a 0-20 scale.}\\

\texttt{Scoring notes:}\\

\texttt{- A scores of 20 represents a masterpiece.}\\

\texttt{- If the criteria is not relevant to the piece (e.g. criteria relating to dialogue but the piece has none), exclude it from your assessment.}\\

\texttt{- Do not add any commentary or explanation to the scores section.}\\

\texttt{- Do not penalize NSFW output; that is the default for some models. Just assess it on its merits.}\\

\texttt{- Everything within the "TEST MODEL RESPONSE" section was written by the test model. Sometimes models like to write comments on the piece after the piece is concluded; if this happens you should ignore their comments.}\\

\texttt{- When judging, ignore the quality of the response if the criteria is not relevant to quality of the writing.}

\texttt{- In the output, write the metric names exactly as below so they can be parsed.}\\

\texttt{- Do not use markdown in your response. Use the designated output format exactly.}\\

\texttt{- You are to write a comprehensive analysis of the piece, then give your scores.}\\

\texttt{- You are a critic, and your job is to be critical, especially of any failings or amateurish elements.}\\

\texttt{- Output format is:}\\

\texttt{[Analysis]}\\

\texttt{Write your detailed analysis.}\\

\texttt{[Scores]}\\

\texttt{Metric 1 name: [Score 0-20]}\\

\texttt{Metric 2 name: ...}\\

\texttt{---}\\
\end{tcolorbox}

\begin{tcolorbox}[colback=blue!5!white, colframe=blue!75!black, title=Poem/Story Quality Evaluation Rubric (Part 2):]
\small

\texttt{Now, rate the supplied model output on the following criteria:}\\

\texttt{1. Surprising and Creative}\\
\texttt{2. Imagery and Descriptive Quality}\\
\texttt{3. Nuanced Characters}\\
\texttt{4. Emotionally Complex}\\
\texttt{5. Elegant Prose}\\
\texttt{6. Well-earned Lightness or Darkness}\\
\texttt{7. Emotionally Engaging}\\
\texttt{8. Consistent Voice/Tone of Writing}\\
\texttt{9. Sentences Flow Naturally}\\
\texttt{10. Overall Reader Engagement}
\end{tcolorbox}

\begin{tcolorbox}[colback=blue!5!white, colframe=blue!75!black, title=Joke Quality Evaluation Rubric]
\small
\texttt{You will receive:\\
1. The original joke prompt (may or may not contain a topic).\\
2. The model-generated joke.\\
\\
Your task is to evaluate the joke based on three qualitative metrics.\\
\\
Evaluation rules:\\
- If the prompt includes a topic (e.g., "octopus," "coffee"), check whether the joke is on-topic and score Relevance from 0–5.\\
- If the prompt does not include a topic (e.g., "Tell me a joke"), automatically assign Relevance = 5.\\
- A good joke should use at least one recognizable comedic device (pun, irony, exaggeration, reversal, absurd logic, etc.).\\
- Assign scores on a 0–5 scale (0 = very poor, 5 = excellent) for each dimension:\\
  - Relevance (0–5): How well does the joke address the topic (or 5 if no topic given).\\
  - Comedic Device (0–5): How clearly does the joke use a humor mechanism.\\
  - Humor Quality (0–5): How funny, witty, or clever is the joke overall.\\
\\
Output format:\\
Return a JSON object in the following format:\\
\{\\
  "Relevance": <int>,\\
  "Comedic Device": <int>,\\
  "Humor Quality": <int>\\
\}\\
\\
Input format:\\
Prompt: \{prompt\}\\
Generated joke: \{joke\}}
\end{tcolorbox}

\newpage
\section{Additional Experimental Results}

\subsection{Sensitivity Analysis of the Power Exponent $\alpha$}
\label{apd:alpha_sensitivity}

To evaluate the impact of the power exponent $\alpha$ on reasoning elicitation, we conducted a sensitivity analysis on Qwen2.5-Math-1.5B across $\alpha \in \{2, 4, 6\}$. As illustrated in the tuning results , $\alpha=4$ achieves the superior performance (e.g., 34.30 ) compared to $\alpha=2$ (34.08) and $\alpha=6$ (33.64).

\begin{figure}[htbp]
    \centering
    \includegraphics[width=0.6\linewidth]{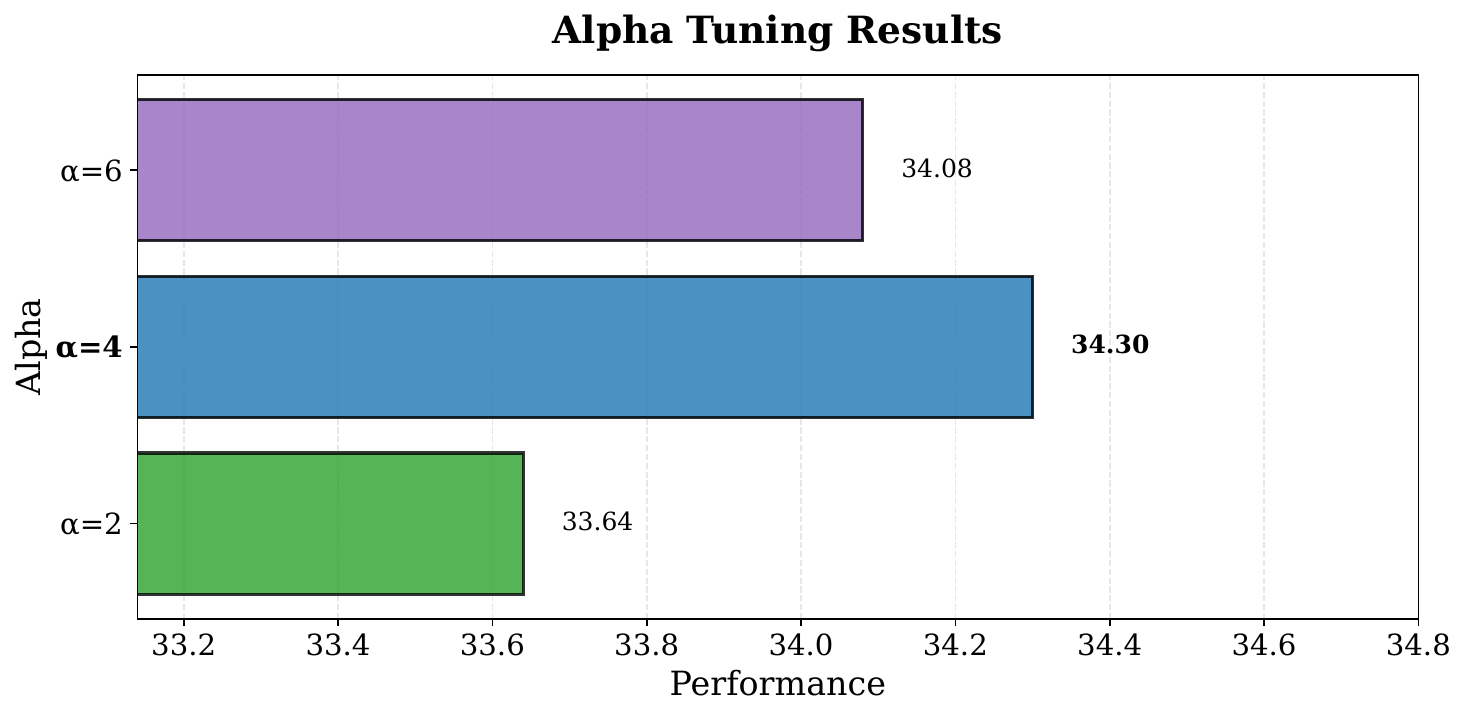}
    \caption{Performance comparison of Qwen2.5-Math-1.5B with varying $\alpha$. The results demonstrate that $\alpha=4$ provides the optimal balance for distribution sharpening.}
    \label{fig:alpha_sensitivity}
\end{figure}

The results suggest that $\alpha=4$ represents an ideal trade-off for the base model: $\alpha=2$ provides insufficient sharpening to effectively prioritize high-quality reasoning paths, while $\alpha=6$ tends to induce over-sharpening, potentially causing the model to converge prematurely to local optima. Consequently, we utilize $\alpha=4$ as the default for all reasoning experiments.

\subsection{Mechanisms of Reasoning Activation via PowerFlow}
\label{apd:mechanism}

In the absence of external supervision, PowerFlow is formulated as a mechanism to elicit the model's latent internal capabilities rather than injecting novel skills. Indeed, the framework's efficacy is significantly underpinned by the robust knowledge representation and reasoning primitives established during the base model's initial training. As illustrated in Figure~\ref{fig:pass_at_n}, the performance of PowerFlow, similar to GRPO, is eventually matched or even surpassed by the Base model on OlympiadBench as $n$ increases toward 256. This observation aligns with prior findings indicating that performance gains in this regime stem from increased sampling efficiency rather than the acquisition of new knowledge. Consequently, we view PowerFlow as a length-neutral amortized sampler that more effectively surfaces high-value reasoning paths already present within the model's intrinsic distribution.

\begin{figure}[htbp]
    \centering
    \includegraphics[width=0.5\linewidth]{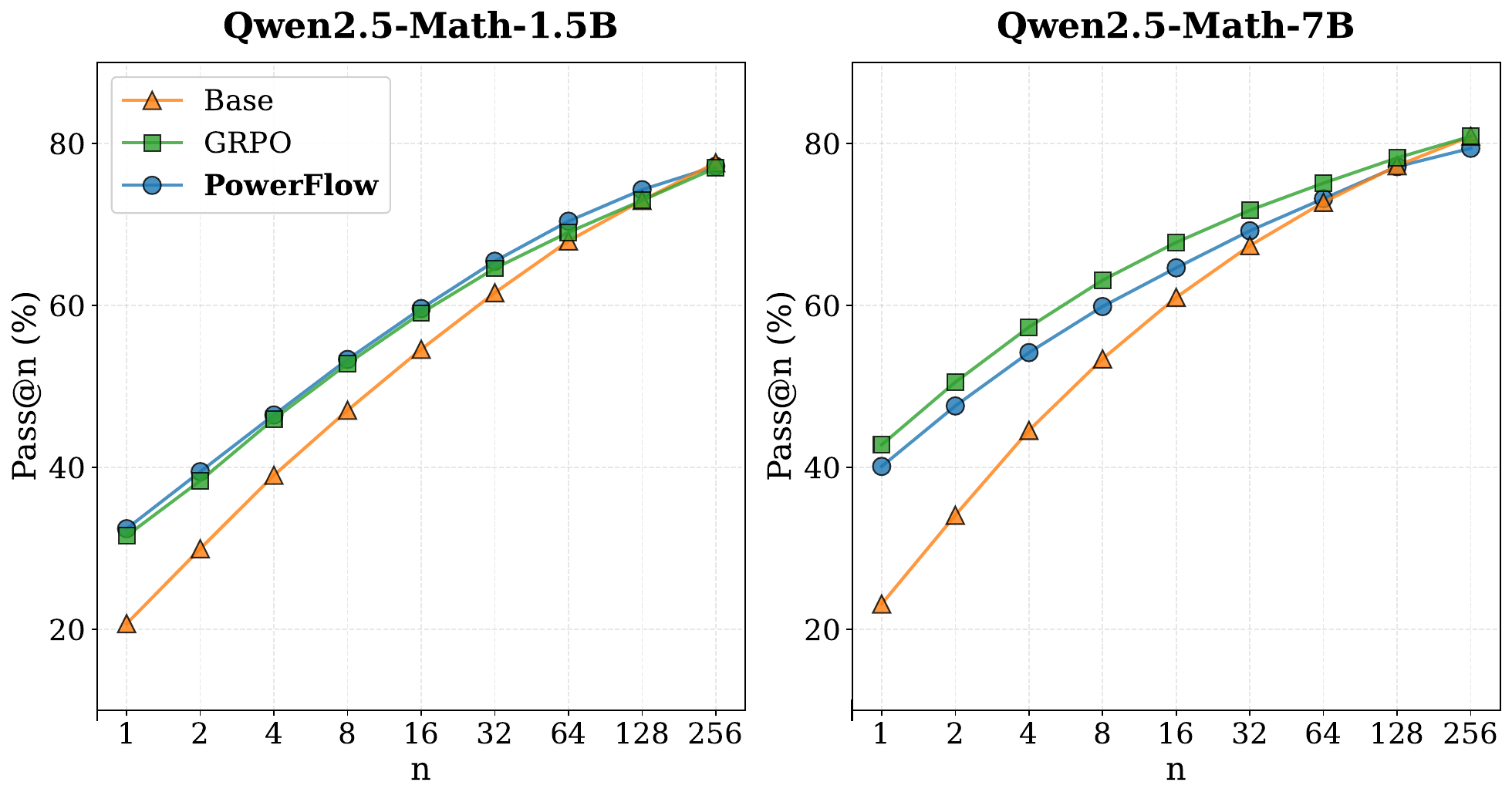}
    \caption{Pass@$n$ comparison on OlympiadBench. As $n$ increases, the performance gap between fine-tuned models and the Base model narrows, suggesting that PowerFlow primarily improves elicitation efficiency.}
    \label{fig:pass_at_n}
\end{figure}

\subsection{Component Ablation of PowerFlow}
\label{apd:component_ablation}

To isolate the contribution of each component of the PowerFlow objective, we ablate the two key ingredients of Eq.~\eqref{eq:powerflow} on Qwen2.5-Math-1.5B: (i) the $\alpha$-power distribution matching itself, and (ii) the length-aware correction term. Table~\ref{tab:component_ablation} reports avg@16 averaged over our six reasoning benchmarks; SEM is computed by bootstrap over the 16 samples per problem.

\begin{table}[h]
\centering
\caption{Component ablation of PowerFlow on Qwen2.5-Math-1.5B. ``Dist.\ matching only'' is exactly the trajectory-balance variant TB-traj reported in Figure~\ref{fig:comparison}, which suffers length collapse during training; ``+ format'' and ``+ length correction'' isolate the effect of each individual ingredient.}
\label{tab:component_ablation}
\begin{tabular}{lcc}
\toprule
Configuration & Avg & SEM$_{\mathrm{avg}}$ \\
\midrule
Dist.\ matching only (= TB-traj) & $\approx 0$ & --- \\
\quad + format penalty $\psi$ only & 14.30 & 0.27 \\
\quad + length correction only (no $\psi$) & 33.48 & 0.36 \\
\midrule
\textbf{PowerFlow (full)} & \textbf{34.30} & \textbf{0.30} \\
\bottomrule
\end{tabular}
\end{table}

The ablation reveals a clear ordering of effects. Without any correction, pure $\alpha$-power distribution matching (= TB-traj) collapses response length and drives average performance toward $0$ as training progresses; adding only a format penalty rescues a well-formed answer slot but cannot prevent the underlying length collapse, so accuracy remains low ($14.30$). The length-aware reparameterization is responsible for the bulk of the gain ($33.48$), and the format penalty contributes a small but consistent residual ($+0.82$) on top of it, primarily by ensuring instruction-format compliance for a minority of correct-but-mis-formatted completions.

\textbf{Format-only on a larger model.}\, To further address the concern that gains might trivially come from format compliance, we additionally train a format-only variant ($\alpha=1$, format penalty $\psi$ active) on Qwen2.5-Math-7B. It reaches an avg of $36.22$ (SEM$_{\mathrm{avg}}=0.30$), which is $5.95$ points below PowerFlow ($42.17$, SEM$_{\mathrm{avg}}=0.27$) on the same model and benchmark suite. This $\sim 6$-point gap at the 7B scale, comfortably exceeding $1\sigma$, confirms that the improvements reported in Table~\ref{tab:reasoning_results} arise from principled distribution sharpening rather than from format regularization alone.

\subsection{Lexical Diversity Analysis}
\label{apd:creativity_lexical}

To complement our semantic analysis, we evaluate the lexical variety of generated responses. Figure~\ref{fig:creativity_lexical} maps the Pareto frontier between generation quality and lexical diversity (measured by ROUGE-L scores among outputs). Consistent with our semantic findings, PowerFlow (stars) consistently shifts the frontier toward the upper-right quadrant. The shaded regions highlight the area of Pareto improvement over the original instruction-tuned baselines. While standard methods often exhibit a steep trade-off between variety and coherence, PowerFlow successfully reduces lexical redundancy while simultaneously enhancing overall output quality.

\begin{figure}[h]
    \centering
    \includegraphics[width=0.5\linewidth]{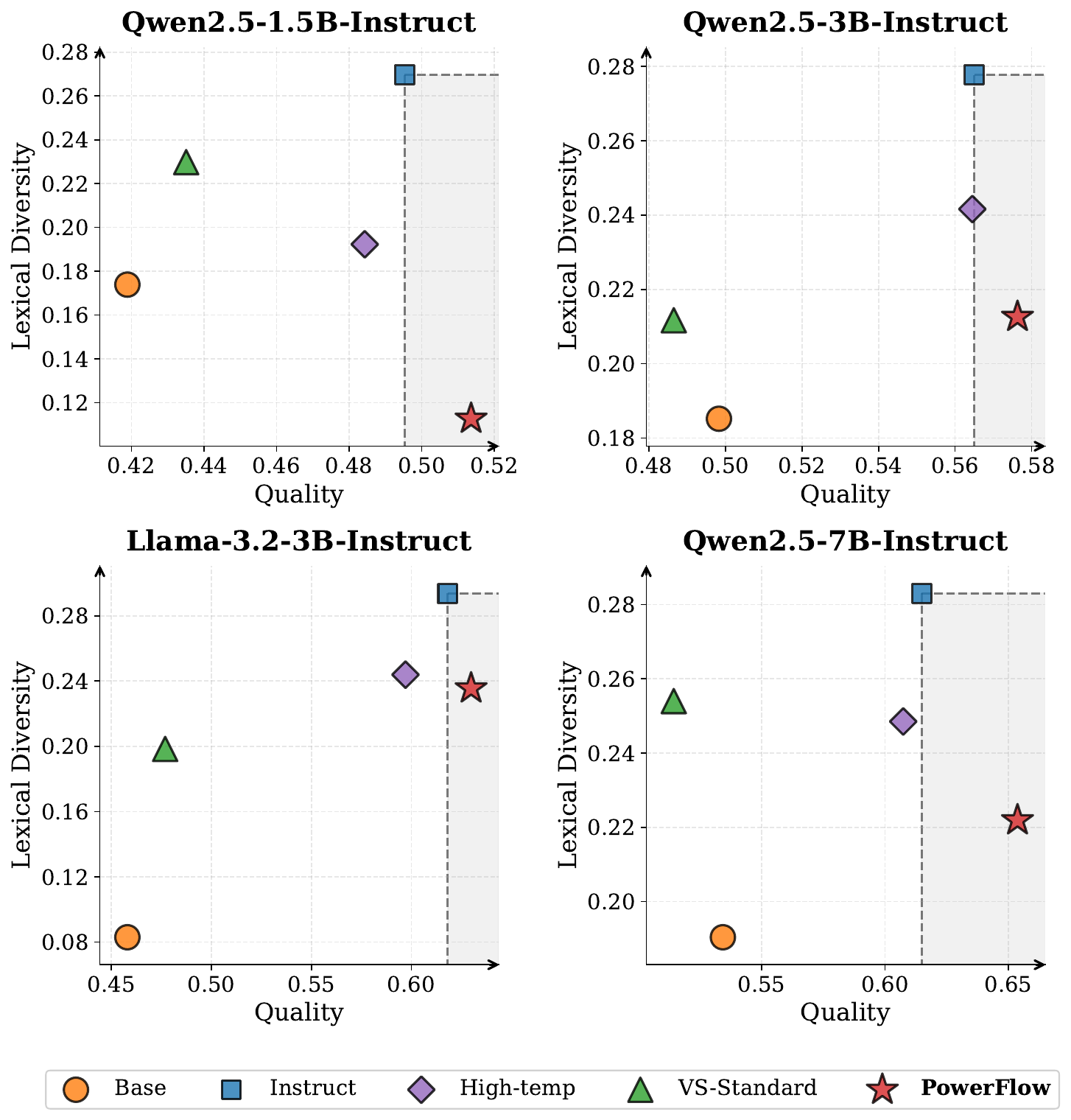}
    \caption{Quality-Lexical Diversity Pareto frontier across four model families. Shaded regions indicate areas of Pareto dominance over the \textit{Instruct} baseline. PowerFlow (stars) uniquely improves both metrics simultaneously.}
    \label{fig:creativity_lexical}
\end{figure}

\subsection{Comprehensive Task-Level Performance on Creative Writing}
\label{apd:creative_table}

Table~\ref{tab:creative_num} presents a granular performance breakdown across three creative genres: poem continuation, joke writing, and story generation. PowerFlow demonstrates notable consistency across all model series and tasks, achieving a superior overall equilibrium between diversity and quality metrics. Specifically, our method consistently outperforms baselines in each category while uniquely realizing concurrent gains in both diversity and quality. These findings confirm that PowerFlow's distribution flattening revitalizes creative potential without compromising the model's robust instruction-following performance.

\begin{table*}[!ht]
\centering
\caption{Comprehensive Performance Comparison across Creative Writing Tasks. Div. denotes Semantic Diversity, R-L denotes ROUGE-L (Lexical Redundancy), and Qual. denotes LLM-judged Quality.}
\label{tab:creative_num}
\resizebox{\textwidth}{!}{ 
\begin{tabular}{l|ccc|ccc|ccc|ccc}
\toprule
 & \multicolumn{3}{c|}{Poem} & \multicolumn{3}{c|}{Joke} & \multicolumn{3}{c|}{Story} & \multicolumn{3}{c}{Overall Average} \\
Model & Div.$\uparrow$ & R-L$\downarrow$ & Qual.$\uparrow$ & Div.$\uparrow$ & R-L$\downarrow$ & Qual.$\uparrow$ & Div.$\uparrow$ & R-L$\downarrow$ & Qual.$\uparrow$ & Div.$\uparrow$ & R-L$\downarrow$ & Qual.$\uparrow$ \\
\midrule
\textbf{Qwen2.5-1.5B-Instruct} & & & & & & & & & & & & \\
\quad Instruct & 0.1623 & 0.1882 & 0.4056 & 0.2981 & 0.4391 & 0.7077 & 0.2253 & 0.1816 & 0.3726 & 0.2286 & 0.2696 & 0.4953 \\
\quad High-temp & 0.2313 & 0.1227 & 0.4051 & 0.3620 & 0.3141 & 0.6854 & 0.2863 & 0.1402 & 0.3624 & 0.2932 & 0.1923 & 0.4843 \\
\quad Base & 0.1895 & 0.1720 & 0.2947 & 0.3795 & 0.1804 & 0.6725 & 0.2055 & 0.1693 & 0.2894 & 0.2582 & 0.1739 & 0.4189 \\
\quad VS-Standard & 0.2404 & 0.2087 & 0.3203 & 0.3900 & 0.2742 & 0.7181 & 0.2817 & 0.2063 & 0.2670 & 0.3040 & 0.2298 & 0.4351 \\
\quad \textbf{PowerFlow (Ours)} & 0.2492 & 0.1245 & 0.4123 & 0.4446 & 0.0960 & 0.7423 & 0.2994 & 0.1175 & 0.3862 & 0.3311 & 0.1127 & 0.5136 \\
\midrule
\textbf{Qwen2.5-3B-Instruct} & & & & & & & & & & & & \\
\quad Instruct & 0.1491 & 0.2024 & 0.4678 & 0.2894 & 0.3906 & 0.7939 & 0.1860 & 0.2401 & 0.4331 & 0.2082 & 0.2777 & 0.5650 \\
\quad High-temp & 0.1799 & 0.1699 & 0.4579 & 0.3194 & 0.3377 & 0.7909 & 0.2073 & 0.2173 & 0.4446 & 0.2355 & 0.2416 & 0.5645 \\
\quad Base & 0.1740 & 0.1777 & 0.3825 & 0.3330 & 0.2080 & 0.7575 & 0.2021 & 0.1700 & 0.3548 & 0.2364 & 0.1852 & 0.4983 \\
\quad VS-Standard & 0.3020 & 0.1715 & 0.3369 & 0.3756 & 0.2432 & 0.8032 & 0.2710 & 0.2202 & 0.3194 & 0.3162 & 0.2117 & 0.4865 \\
\quad \textbf{PowerFlow (Ours)} & 0.2085 & 0.1544 & 0.4675 & 0.3360 & 0.2931 & 0.8147 & 0.2093 & 0.1903 & 0.4466 & 0.2513 & 0.2126 & 0.5763 \\
\midrule
\textbf{Llama-3.2-3B-Instruct} & & & & & & & & & & & & \\
\quad Instruct & 0.1296 & 0.2143 & 0.4834 & 0.2783 & 0.4531 & 0.8680 & 0.1710 & 0.2136 & 0.5030 & 0.1930 & 0.2937 & 0.6182 \\
\quad High-temp & 0.1542 & 0.1753 & 0.4723 & 0.3151 & 0.3796 & 0.8604 & 0.2027 & 0.1768 & 0.4585 & 0.2240 & 0.2439 & 0.5971 \\
\quad Base & 0.3122 & 0.0741 & 0.3327 & 0.3298 & 0.0762 & 0.8144 & 0.3188 & 0.0987 & 0.2270 & 0.3203 & 0.0830 & 0.4580 \\
\quad VS-Standard & 0.3095 & 0.1682 & 0.3280 & 0.3855 & 0.2795 & 0.8423 & 0.3860 & 0.1475 & 0.2604 & 0.3603 & 0.1984 & 0.4769 \\
\quad \textbf{PowerFlow (Ours)} & 0.1546 & 0.1983 & 0.4963 & 0.3135 & 0.3004 & 0.8791 & 0.1971 & 0.2077 & 0.5144 & 0.2217 & 0.2355 & 0.6299 \\
\midrule
\textbf{Qwen2.5-7B-Instruct} & & & & & & & & & & & & \\
\quad Instruct & 0.1360 & 0.2147 & 0.5054 & 0.2803 & 0.4242 & 0.8383 & 0.1530 & 0.2100 & 0.5010 & 0.1898 & 0.2830 & 0.6149 \\
\quad High-temp & 0.1613 & 0.1868 & 0.4809 & 0.3108 & 0.3668 & 0.8238 & 0.1721 & 0.1921 & 0.5176 & 0.2147 & 0.2485 & 0.6074 \\
\quad Base & 0.1725 & 0.1744 & 0.4451 & 0.3416 & 0.2301 & 0.7942 & 0.1866 & 0.1667 & 0.3636 & 0.2336 & 0.1904 & 0.5343 \\
\quad VS-Standard & 0.2416 & 0.2205 & 0.3647 & 0.3763 & 0.2444 & 0.8335 & 0.1921 & 0.2971 & 0.3449 & 0.2700 & 0.2540 & 0.5144 \\
\quad \textbf{PowerFlow (Ours)} & 0.1620 & 0.1641 & 0.5519 & 0.3257 & 0.3193 & 0.8460 & 0.1634 & 0.1821 & 0.5630 & 0.2170 & 0.2219 & 0.6537 \\
\bottomrule
\end{tabular}
}
\end{table*}

\clearpage
\section{Detailed Analysis of LA-TB Ranking Distortion}
\label{apd:la-tb-analysis}

This appendix collects the full proofs of Propositions~\ref{prop:i-projection} and~\ref{prop:kl-bound} stated in the main text alongside Eq.~\eqref{eq:length_normalized_dist}, states an exact within-shell preservation result that complements the I-projection characterization, and provides a finer-grained pairwise distortion and global warp characterization of the length-aware Trajectory-Balance (LA-TB) objective.

\textbf{Setup and notation.}\, To lighten notation, throughout this appendix we instantiate the target as the $\alpha$-power distribution and write
\begin{equation}
p_\alpha(y\mid q) \;:=\; \tilde p_{\text{target}}(y\mid q) \;\propto\; p_{\text{base}}(y\mid q)^\alpha,
\end{equation}
which is the optimization target instantiated in Eq.~\eqref{eq:powerflow} of the main text. Let $L(y):=|y|$ denote the response length, and define the learned per-token log-partition $\lambda_q := \log Z'_\phi(q)$. The length-aware equilibrium distribution in Eq.~\eqref{eq:length_normalized_dist} can then be written compactly as
\begin{equation}
\label{eq:appx-pi-lambda}
\pi_\lambda(y\mid q) \;=\; \frac{p_\alpha(y\mid q)\,e^{-\lambda_q L(y)}}{M_q(\lambda_q)}, \qquad M_q(\lambda) \;:=\; \mathbb{E}_{y\sim p_\alpha(\cdot|q)}\!\bigl[e^{-\lambda L(y)}\bigr].
\end{equation}
We omit the format penalty $\psi(y)$ for clarity; all statements extend verbatim by replacing $p_\alpha$ with $p_{\alpha,\psi}(y|q)\propto p_{\text{base}}(y|q)^\alpha\,e^{\alpha L(y)\psi(y)}$.

\textbf{Normalization convention and identification of $\lambda_q$.}\, Following the standard GFlowNet convention, $\tilde p_{\text{target}}(y|q)=p_{\text{base}}(y|q)^\alpha$ is the \emph{unnormalized} $\alpha$-power weight; in this appendix we will reuse the symbol $p_\alpha(y|q)$ for this unnormalized weight in display formulas, with the understanding that all expectations under $p_\alpha$ are normalized over its support. At the LA-TB equilibrium the loss in Eq.~\eqref{eq:la_tb} vanishes pointwise when $\pi^*(y|q)=p_\alpha(y|q)(Z'_\phi(q))^{-|y|}$, which forces $\lambda_q:=\log Z'_\phi(q)$ to be the unique scalar satisfying the normalization condition
\begin{equation}
\label{eq:appx-norm-pin}
\sum_{y}\, p_\alpha(y|q)\,e^{-\lambda_q L(y)}\;=\;1,
\end{equation}
so that Eq.~\eqref{eq:appx-pi-lambda} is a valid probability distribution. (Existence and uniqueness of $\lambda_q$ follow from strict monotonicity of the left-hand side in $\lambda$, provided $L$ is not constant on the support of $p_\alpha$.) Identifying $\ell_q:=\mathbb{E}_{\pi^*}[L]$ with the LA-TB equilibrium expected length, the same $\lambda_q$ is the Lagrange multiplier characterizing Proposition~\ref{prop:i-projection}'s I-projection at budget $\ell_q$; the LA-TB equilibrium is therefore exactly the I-projection of $p_\alpha$ onto the length-calibrated family with this equilibrium budget. We further assume the model has a maximum generation length $L_{\max}<\infty$, so $L(y)\in\{1,\dots,L_{\max}\}$ almost surely. This makes $A_q(\lambda):=\log M_q(\lambda)$ entire on $\mathbb{R}$, makes the derivative-under-expectation interchanges and Taylor expansions used below valid, and keeps the $O(|\lambda_q|^3)$ remainder in Proposition~\ref{prop:kl-bound} bounded by an explicit constant $C(L_{\max})|\lambda_q|^3$ (uniform over prompts at fixed $L_{\max}$).

\subsection{Within-shell preservation}
\label{apd:proof-within-shell}

\begin{proposition}[Exact within-shell preservation]
\label{prop:within-shell}
For any prompt $q$ and any length $m$, conditioning on $L(y)=m$ yields $\pi_\lambda(y\mid q,\,L(y)=m)=p_\alpha(y\mid q,\,L(y)=m)$. In particular, the full mode ranking of $p_\alpha$ inside every fixed-length shell is preserved exactly by LA-TB; only the relative mass between different length shells is reweighted.
\end{proposition}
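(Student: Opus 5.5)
The argument is a direct computation with the closed form of the equilibrium in Eq.~\eqref{eq:appx-pi-lambda}. I would show that the tilt $e^{-\lambda_q L(y)}$ is constant on each length shell, so it cancels when we condition on that shell.

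First, fix $q$ and a length $m$ with $\Pr_{p_\alpha}(L=m)>0$. If this probability is zero, the shell carries no mass under either distribution and there is nothing to prove, since $\pi_\lambda\ll p_\alpha$ and $e^{-\lambda_q m}>0$. Write the conditional law by Bayes' rule. For $y$ with $L(y)=m$,
\begin{equation*}
\pi_\lambda(y\mid q,L(y)=m)=\frac{\pi_\lambda(y\mid q)}{\sum_{y':L(y')=m}\pi_\lambda(y'\mid q)}=\frac{p_\alpha(y\mid q)\,e^{-\lambda_q m}/M_q(\lambda_q)}{\sum_{y':L(y')=m}p_\alpha(y'\mid q)\,e^{-\lambda_q m}/M_q(\lambda_q)}.
\end{equation*}
The factor $e^{-\lambda_q m}/M_q(\lambda_q)$ is independent of $y$ within the shell and strictly positive, so it cancels between numerator and denominator. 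What remains is $p_\alpha(y\mid q)/\sum_{y':L(y')=m}p_\alpha(y'\mid q)=p_\alpha(y\mid q,L(y)=m)$.

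Second, I would note that the same cancellation holds whether $p_\alpha$ denotes the unnormalized weight or the normalized distribution, following the convention stated in the setup. Any normalizing constant is also shell-independent, so the identity does not depend on which $\lambda_q$ satisfies Eq.~\eqref{eq:appx-norm-pin}.

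Third, the ranking claim follows at once. Equal conditional distributions induce the same order on the shell. Equivalently, for $L(y)=L(y')$ the ratio $\pi_\lambda(y\mid q)/\pi_\lambda(y'\mid q)$ equals $p_\alpha(y\mid q)/p_\alpha(y'\mid q)$ exactly. Finally, the shell masses satisfy $\pi_\lambda(L=m)\propto e^{-\lambda_q m}\,p_\alpha(L=m)$, which makes precise that only the inter-shell mass is reweighted.

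The only delicate point is the bookkeeping for zero-mass shells and the normalization convention. There is no real obstacle, and the result extends verbatim to $p_{\alpha,\psi}$ as remarked in the setup.
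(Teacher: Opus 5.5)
Your proposal is correct and follows the paper's proof: both note that the tilt $e^{-\lambda_q m}$ (together with any global normalizer) is constant on the shell $L(y)=m$, so it cancels in the conditional renormalization, and the ranking claim then follows immediately. Your extra bookkeeping is a harmless refinement that the paper leaves implicit: it covers null shells, the unnormalized-weight convention, and the explicit shell-mass reweighting $\pi_\lambda(L=m)\propto e^{-\lambda_q m}\,p_\alpha(L=m)$.
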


\begin{proof}
Conditioning on the length shell $L(y)=m$, the factor $e^{-\lambda_q m}$ is constant across all sequences of length $m$ and therefore cancels in the conditional renormalization:
\begin{equation}
\pi_\lambda(y\mid q,\,L(y)=m)
\;=\; \frac{p_\alpha(y\mid q)\,e^{-\lambda_q m}}{\sum_{z:\,L(z)=m} p_\alpha(z\mid q)\,e^{-\lambda_q m}}
\;=\; \frac{p_\alpha(y\mid q)}{\sum_{z:\,L(z)=m} p_\alpha(z\mid q)}
\;=\; p_\alpha(y\mid q,\,L(y)=m).
\end{equation}
In particular, $p_\alpha(y\mid q)\ge p_\alpha(y'\mid q) \Leftrightarrow \pi_\lambda(y\mid q)\ge \pi_\lambda(y'\mid q)$ for any pair with $L(y)=L(y')$.
\end{proof}

\textbf{Interpretation.}\, LA-TB does \emph{not} arbitrarily reshape the $\alpha$-power landscape. It leaves every fixed-length ``slice'' untouched and only reweights the total mass of different length shells through a single scalar $\lambda_q$.

\subsection{Exact pairwise distortion across length shells}
\label{apd:proof-pairwise}

\begin{proposition}[Exact pairwise distortion formula]
\label{prop:pairwise-distortion}
For any two responses $y_i,y_j$,
\begin{equation}
\log \frac{\pi_\lambda(y_i\mid q)}{\pi_\lambda(y_j\mid q)} \;=\; \underbrace{\log \frac{p_\alpha(y_i\mid q)}{p_\alpha(y_j\mid q)}}_{\Delta_\alpha(i,j)} \;-\; \lambda_q\,\underbrace{\bigl(L(y_i)-L(y_j)\bigr)}_{\Delta_L(i,j)}.
\end{equation}
Hence the LA-TB log-odds gap differs from the $\alpha$-power log-odds gap by exactly the single linear correction $\lambda_q\,\Delta_L$. A pairwise rank inversion occurs if and only if
\begin{equation}
\Delta_\alpha(i,j)\,\bigl(\Delta_\alpha(i,j)-\lambda_q\Delta_L(i,j)\bigr) \;<\; 0,
\end{equation}
and a \emph{sufficient} condition for order preservation is
\begin{equation}
\bigl|\Delta_\alpha(i,j)\bigr| \;>\; |\lambda_q|\,\bigl|\Delta_L(i,j)\bigr|.
\end{equation}
\end{proposition}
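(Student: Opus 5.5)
The plan is to work directly from the closed form of the LA-TB equilibrium in Eq.~\eqref{eq:appx-pi-lambda}, $\pi_\lambda(y\mid q)=p_\alpha(y\mid q)\,e^{-\lambda_q L(y)}/M_q(\lambda_q)$. The point is that the normalizer $M_q(\lambda_q)$ (equal to $1$ under the convention of Eq.~\eqref{eq:appx-norm-pin}, but its value does not matter) is common to every response and cancels in any ratio. With this in hand, the whole statement reduces to elementary sign analysis of a single linear expression.

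\textbf{Step 1 (log-odds identity).} Take the ratio $\pi_\lambda(y_i\mid q)/\pi_\lambda(y_j\mid q)$. The factor $M_q(\lambda_q)$ cancels, and so does the partition function of $p_\alpha$ if one works with the normalized version. Taking logarithms gives
\[
\log\frac{\pi_\lambda(y_i\mid q)}{\pi_\lambda(y_j\mid q)}=\Delta_\alpha(i,j)-\lambda_q\bigl(L(y_i)-L(y_j)\bigr),
\]
which is the first claim. Here I need only $p_\alpha(y_i\mid q),p_\alpha(y_j\mid q)>0$, so that the logarithms are finite. This holds on the support of $p_{\text{base}}$, and I will state that restriction explicitly.

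\textbf{Step 2 (inversion criterion).} A pairwise rank inversion means that the sign of the log-odds under $\pi_\lambda$ is strictly opposite to the sign under $p_\alpha$. Write $D:=\Delta_\alpha-\lambda_q\Delta_L$. Strict opposition of signs of two reals $a,b$ is exactly the condition $ab<0$, so inversion holds if and only if $\Delta_\alpha D<0$. The only care needed is with ties. When $\Delta_\alpha=0$ or $D=0$, I will interpret the event as a tie rather than an inversion, consistent with the strict inequality. I will state this convention in one sentence.

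\textbf{Step 3 (sufficient condition).} Suppose $|\Delta_\alpha|>|\lambda_q||\Delta_L|$. Then $\Delta_\alpha\neq0$, and by the reverse triangle inequality $D$ lies strictly between $\Delta_\alpha-|\lambda_q\Delta_L|$ and $\Delta_\alpha+|\lambda_q\Delta_L|$. Both endpoints have the sign of $\Delta_\alpha$, so $\Delta_\alpha D>0$. The order is therefore strictly preserved, and Step 2 rules out an inversion.

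None of these steps is a real obstacle, since the entire proposition is an algebraic consequence of the exponential-tilt form. The only point requiring care is the bookkeeping of conventions in Steps 1 and 2. Specifically, I need to confirm that the unnormalized/normalized reuse of the symbol $p_\alpha$ described in the normalization paragraph does not affect ratios, and I need to handle ties and zero-probability responses cleanly.
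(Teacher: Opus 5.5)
Your proposal is correct and follows the paper's own proof: cancel the common normalizer $M_q(\lambda_q)$ in the log-ratio of the exponential-tilt form, then read off the inversion criterion as the condition that $\Delta_\alpha$ and $\Delta_\alpha-\lambda_q\Delta_L$ have opposite signs; you also spell out the sufficient condition, which the paper leaves implicit. One small correction in Step 3: $D=\Delta_\alpha-\lambda_q\Delta_L$ is not strictly between $\Delta_\alpha-|\lambda_q\Delta_L|$ and $\Delta_\alpha+|\lambda_q\Delta_L|$ but equal to one of these endpoints, and the argument still goes through because both endpoints have the sign of $\Delta_\alpha$.
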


\begin{proof}
By Eq.~\eqref{eq:appx-pi-lambda}, $\log\pi_\lambda(y\mid q) = \log p_\alpha(y\mid q) - \lambda_q L(y) - \log M_q(\lambda_q)$. Subtracting the log-densities of $y_i,y_j$ cancels the normalizer and yields the stated formula. The inversion criterion is exactly the requirement that $\Delta_\alpha$ and $\Delta_\alpha-\lambda_q\Delta_L$ have opposite signs.
\end{proof}

\begin{corollary}[Top-$K$ set preservation]
\label{cor:topk-preservation}
Let $y_{(1)},\dots,y_{(K)}$ denote the top-$K$ responses under $p_\alpha(\cdot| q)$. If, for every $i\le K$ and every $j>K$,
\begin{equation}
\log\frac{p_\alpha(y_{(i)}\mid q)}{p_\alpha(y_{(j)}\mid q)} \;>\; |\lambda_q|\,\bigl|L(y_{(i)})-L(y_{(j)})\bigr|,
\end{equation}
then the top-$K$ \emph{set} under $\pi_\lambda(\cdot| q)$ is identical to that under $p_\alpha(\cdot| q)$. If in addition the same inequality holds for every $1\le i<j\le K$, then the internal ordering of the top-$K$ list is also preserved.
\end{corollary}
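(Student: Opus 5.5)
The plan is to derive the corollary directly from the sufficient condition in Proposition~\ref{prop:pairwise-distortion}, applied pair by pair. Fix $i\le K$ and $j>K$, and write $\Delta_\alpha:=\Delta_\alpha(y_{(i)},y_{(j)})$ and $\Delta_L:=\Delta_L(y_{(i)},y_{(j)})$. The hypothesis says $\Delta_\alpha>|\lambda_q|\,|\Delta_L|\ge 0$, so in particular $\Delta_\alpha>0$. The pairwise formula then gives
\begin{equation}
\log\frac{\pi_\lambda(y_{(i)}\mid q)}{\pi_\lambda(y_{(j)}\mid q)} \;=\; \Delta_\alpha-\lambda_q\Delta_L \;\ge\; \Delta_\alpha-|\lambda_q|\,|\Delta_L| \;>\; 0.
\end{equation}
Hence $\pi_\lambda(y_{(i)}\mid q)>\pi_\lambda(y_{(j)}\mid q)$ strictly. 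So under $\pi_\lambda$, every element of the original top-$K$ set strictly dominates every element outside it.

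Next, I convert this dominance into equality of the top-$K$ sets. Let $S=\{y_{(1)},\dots,y_{(K)}\}$. Every $y\notin S$ has strictly smaller $\pi_\lambda$-mass than every member of $S$. Therefore any set of $K$ responses with the largest $\pi_\lambda$-mass must equal $S$: if it contained some $y\notin S$, it would omit some $y_{(i)}\in S$, and swapping the two would strictly increase the total mass. A second way to see this is that each of the $K$ largest values of $\pi_\lambda$ must be attained inside $S$. This also shows the top-$K$ set under $\pi_\lambda$ is uniquely determined, with no ties at the boundary.

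For the internal ordering, I apply the same computation to pairs $1\le i<j\le K$. Since $y_{(i)}$ ranks above $y_{(j)}$ under $p_\alpha$, we have $\Delta_\alpha\ge 0$, and the hypothesis again forces $\Delta_\alpha>0$. The displayed chain then gives $\pi_\lambda(y_{(i)}\mid q)>\pi_\lambda(y_{(j)}\mid q)$ for all $i<j\le K$, so the ordering $y_{(1)},\dots,y_{(K)}$ is preserved.

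The only subtle step is tie-handling and the meaning of ``top-$K$'' when the response space is infinite or $p_\alpha$ has ties. I would state the argument in terms of the dominance relation, that every member of $S$ strictly beats every non-member, and observe that the strict inequality in the hypothesis removes all ambiguity. Everything else follows directly from Proposition~\ref{prop:pairwise-distortion}.
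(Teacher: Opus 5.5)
Your proof is correct and takes the route the paper intends. The paper gives no separate proof of this corollary; it treats it as an immediate consequence of the sufficient condition $|\Delta_\alpha(i,j)|>|\lambda_q|\,|\Delta_L(i,j)|$ in Proposition~\ref{prop:pairwise-distortion}, applied to every cross pair ($i\le K$, $j>K$) for the set claim and to every internal pair $i<j\le K$ for the ordering claim, exactly as you do, and your remark that the strict hypothesis also rules out boundary ties under both $p_\alpha$ and $\pi_\lambda$ makes explicit a detail the paper leaves unstated.
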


\textbf{Interpretation.}\, High-margin modes of $p_\alpha$ are insensitive to the length correction; only candidates that are already nearly tied under $p_\alpha$ can be flipped, and only when their length gap is large enough relative to $|\lambda_q|$.

\subsection{Global warp characterization: proof of Proposition~\ref{prop:kl-bound}}
\label{apd:proof-global}

We first record the stronger identities from which the second-order expansion in Proposition~\ref{prop:kl-bound} follows. Let $A_q(\lambda):=\log M_q(\lambda) = \log \mathbb{E}_{y\sim p_\alpha(\cdot| q)}[e^{-\lambda L(y)}]$. Then
\begin{align}
D_{\mathrm{KL}}\!\bigl(\pi_\lambda(\cdot| q)\,\|\,p_\alpha(\cdot| q)\bigr) &\;=\; -\lambda_q\,\mathbb{E}_{\pi_\lambda}[L] - A_q(\lambda_q) \;=\; \lambda_q\,A_q'(\lambda_q) - A_q(\lambda_q), \label{eq:appx-kl-closed}\\[2pt]
\frac{d}{d\lambda}\,D_{\mathrm{KL}}\!\bigl(\pi_\lambda\,\|\,p_\alpha\bigr) &\;=\; \lambda\,\mathrm{Var}_{\pi_\lambda}(L), \label{eq:appx-kl-derivative}\\[2pt]
D_{\mathrm{KL}}\!\bigl(\pi_\lambda\,\|\,p_\alpha\bigr) &\;=\; \int_0^{\lambda_q} t\,\mathrm{Var}_{\pi_t}(L)\,dt. \label{eq:appx-kl-integral}
\end{align}

\begin{proof}[Proof of Proposition~\ref{prop:kl-bound}]
From $\pi_\lambda(y) = p_\alpha(y)\,e^{-\lambda L(y)}/e^{A_q(\lambda)}$ we have $\log\pi_\lambda(y)/p_\alpha(y) = -\lambda L(y) - A_q(\lambda)$, hence
\begin{equation}
D_{\mathrm{KL}}(\pi_\lambda\|p_\alpha) \;=\; \mathbb{E}_{\pi_\lambda}\!\bigl[-\lambda L - A_q(\lambda)\bigr] \;=\; -\lambda\,\mathbb{E}_{\pi_\lambda}[L] - A_q(\lambda),
\end{equation}
which proves Eq.~\eqref{eq:appx-kl-closed}. Standard cumulant-generating-function identities give $A_q'(\lambda) = -\mathbb{E}_{\pi_\lambda}[L]$ and $A_q''(\lambda) = \mathrm{Var}_{\pi_\lambda}(L)$, from which Eqs.~\eqref{eq:appx-kl-derivative}--\eqref{eq:appx-kl-integral} follow. Taylor-expanding Eq.~\eqref{eq:appx-kl-closed} at $\lambda=0$ using $A_q(0)=0$ and $A_q'(0)=-\mathbb{E}_{p_\alpha}[L]$ yields $D_{\mathrm{KL}}(\pi_\lambda\|p_\alpha) = \tfrac{\lambda_q^{2}}{2}\,\mathrm{Var}_{p_\alpha}(L) + O(|\lambda_q|^{3})$, which is exactly Eq.~\eqref{eq:kl-second-order} of the main text.
\end{proof}

\textbf{Interpretation.}\, The deviation of $\pi_\lambda$ from the ideal $\alpha$-power target is exactly a one-dimensional exponential tilt by the sufficient statistic $L(y)$, and its global KL magnitude is controlled to second order by the length variance under $p_\alpha$.

\subsection{I-projection / minimum-distortion property: proof of Proposition~\ref{prop:i-projection}}
\label{apd:proof-iproj}

\begin{proof}[Proof of Proposition~\ref{prop:i-projection}]
Form the Lagrangian
\begin{equation}
\mathcal{L}(\pi,\eta,\lambda) \;=\; \sum_y \pi(y)\log\frac{\pi(y)}{p_\alpha(y)} \;+\; \eta\Bigl(\sum_y \pi(y)-1\Bigr) \;+\; \lambda\Bigl(\sum_y \pi(y)\,L(y) - \ell_q\Bigr).
\end{equation}
Stationarity with respect to $\pi(y)$ gives $\log\pi(y)/p_\alpha(y) + 1 + \eta + \lambda L(y) = 0$, hence $\pi(y)\propto p_\alpha(y)\,e^{-\lambda L(y)}$. Uniqueness follows from strict convexity of KL in $\pi$.
\end{proof}

\textbf{Interpretation.}\, Among \emph{all} distributions with calibrated mean length $\ell_q$, Eq.~\eqref{eq:length_normalized_dist} is the unique closest one in KL to the ideal $\alpha$-power target. LA-TB therefore introduces no extra heuristic bias beyond what is mathematically necessary to enforce length calibration.

\begin{corollary}[Exact consistency under a linear structural-bias model]
\label{cor:linear-structural}
Suppose the $\alpha$-power log-density decomposes as
\begin{equation}
\log p_\alpha(y\mid q) \;=\; s_q(y) + \beta_q\,L(y) - c_q,
\end{equation}
where $s_q(y)$ is a length-neutral semantic score and $\beta_q L(y)$ captures the structural autoregressive length bias. Then \emph{whenever} the LA-TB equilibrium multiplier $\lambda_q$ (pinned by the normalization condition~\eqref{eq:appx-norm-pin}) coincides with $\beta_q$, LA-TB exactly recovers the semantic distribution:
\begin{equation}
\pi_\lambda(y\mid q) \;\propto\; e^{s_q(y)}.
\end{equation}
Since $\lambda_q$ is determined by the calibrated equilibrium length $\ell_q = \mathbb{E}_{\pi^*}[L]$ and need not equal $\beta_q$ in general, this exact recovery is the idealized regime; if $\hat\lambda_q = \beta_q + \varepsilon_q$, the residual pairwise distortion is
\begin{equation}
\log\frac{\pi_{\hat\lambda}(y_i\mid q)}{\pi_{\hat\lambda}(y_j\mid q)} - \bigl(s_q(y_i)-s_q(y_j)\bigr) \;=\; -\varepsilon_q\,\bigl(L(y_i)-L(y_j)\bigr),
\end{equation}
so any remaining ranking warp is fully attributable to estimation error in the learned multiplier $\hat\lambda_q$, and is linear in the length gap.
\end{corollary}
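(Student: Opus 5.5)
The plan is a direct substitution into the exponential-tilt form of Eq.~\eqref{eq:appx-pi-lambda}, followed by a reuse of the pairwise identity of Proposition~\ref{prop:pairwise-distortion}.

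\textbf{Step 1 (exact recovery).} Insert the assumed decomposition $\log p_\alpha(y\mid q)=s_q(y)+\beta_q L(y)-c_q$ into
\[
\pi_\lambda(y\mid q)\propto p_\alpha(y\mid q)\,e^{-\lambda_q L(y)} .
\]
This gives
\[
\pi_\lambda(y\mid q)\propto e^{s_q(y)}\,e^{(\beta_q-\lambda_q)L(y)}\,e^{-c_q}.
\]
The factor $e^{-c_q}$ is independent of $y$, so it is absorbed into the normalizer $M_q(\lambda_q)$. When $\lambda_q=\beta_q$, the length factor equals $1$ and we obtain $\pi_\lambda\propto e^{s_q(y)}$.

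For this conclusion to be well posed, $e^{s_q}$ must be normalizable. That holds here because $L(y)\le L_{\max}$ and $p_\alpha$ is normalizable, so the sum $\sum_y e^{s_q(y)}=e^{c_q}\sum_y p_\alpha(y)e^{-\beta_q L(y)}$ is finite. The idealized regime is therefore a genuine distribution.

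I would also remark that the normalization condition~\eqref{eq:appx-norm-pin} plays no role in the algebra. It only determines which $\lambda_q$ actually arises, and the corollary is stated conditionally on that value coinciding with $\beta_q$.

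\textbf{Step 2 (residual distortion).} For a general $\hat\lambda_q=\beta_q+\varepsilon_q$, apply Proposition~\ref{prop:pairwise-distortion} with $\lambda_q$ replaced by $\hat\lambda_q$:
\[
\log\frac{\pi_{\hat\lambda}(y_i)}{\pi_{\hat\lambda}(y_j)}=\Delta_\alpha(i,j)-\hat\lambda_q\Delta_L(i,j).
\]
From the decomposition, $\Delta_\alpha(i,j)=s_q(y_i)-s_q(y_j)+\beta_q\Delta_L(i,j)$, since the constant $c_q$ cancels in the difference. Substituting, the $\beta_q\Delta_L$ terms cancel and what remains is exactly $-\varepsilon_q\,\Delta_L(i,j)$. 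This residual is linear in the length gap, which is the claimed formula.

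\textbf{Main obstacle.} There is no real technical difficulty; the argument is pure algebra. The only point needing care is interpretive: one must be explicit that the multiplier is the one pinned by the equilibrium, not a free parameter. Consequently the corollary is a conditional statement, and its second part is exactly the quantitative measure of the failure of that condition.
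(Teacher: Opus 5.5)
Your proposal is correct and follows the same route the paper implicitly takes. The paper gives no separate proof of this corollary. It treats it as a direct consequence of two things: substituting the linear decomposition into the exponential-tilt form \eqref{eq:appx-pi-lambda}, where $e^{-c_q}$ is absorbed into the normalizer and the length factor disappears at $\lambda_q=\beta_q$, and specializing the pairwise identity of Proposition~\ref{prop:pairwise-distortion} to $\hat\lambda_q=\beta_q+\varepsilon_q$. Your extra remarks are accurate: normalizability follows from $L_{\max}<\infty$, and the normalization condition only selects which multiplier arises.
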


\textbf{Empirical pairwise inversion rate.}\, To quantify the practical scale of the LA-TB ranking warp, on a trained Qwen2.5-Math-1.5B we sample candidate sets per prompt and compute the inversion rate
\begin{equation}
\mathrm{IR}(q) \;=\; \frac{1}{|S_q|(|S_q|-1)}\sum_{i\ne j} \mathbf{1}\!\left[\Delta_\alpha(i,j)\bigl(\Delta_\alpha(i,j)-\lambda_q\Delta_L(i,j)\bigr)<0\right].
\end{equation}
Averaging over prompts gives $\mathrm{IR}\approx 0.09$, in concordance with Propositions~\ref{prop:within-shell}, \ref{prop:pairwise-distortion}, and the second-order bound underlying Proposition~\ref{prop:kl-bound}: the structural length correction reorders fewer than $10\%$ of pairwise comparisons, and high-margin modes are preserved exactly.

\section{Theoretical Analysis of Majority Voting Dynamics}
\label{apd:theory}

In this section, we provide a formal analysis of the convergence behavior of Reinforcement Learning from Internal Feedback (RLIF) when utilizing majority voting rewards.

\begin{theorem}[Asymptotic Convergence to Dirac Distribution]
\label{thm:dirac_convergence}
Let $\mathcal{Y}$ be a finite output space with cardinality $|\mathcal{Y}| < \infty$. Consider a policy optimization process generating a sequence of policies $\{\pi_k\}_{k=0}^\infty$, where $\pi_{k+1}$ is obtained by solving the entropy-regularized objective defined on the expected reward:
\begin{equation}
    \pi_{k+1} = \operatorname*{argmax}_{\pi \in \Delta(\mathcal{Y})} \left( \mathbb{E}_{y \sim \pi}[\bar{r}_k(y)] - \beta \mathbb{D}_{\text{KL}}(\pi \| \pi_k) \right).
\end{equation}
Here, $\Delta(\mathcal{Y})$ denotes the probability simplex, $\beta > 0$ is the regularization coefficient, and $\bar{r}_k(y)$ is the expected uniform-tie-broken majority-voting reward over a batch of size $N \ge 2$ sampled from $\pi_k$:
\begin{equation}
    \bar{r}_k(y) \triangleq \mathbb{E}_{\mathcal{D}_N \sim \pi_k^N}\!\left[\, \frac{\mathbf{1}\{y \in \operatorname*{Argmax}(\mathcal{D}_N)\}}{|\operatorname*{Argmax}(\mathcal{D}_N)|}\, \right],
\end{equation}
which assigns probability $1/|\operatorname*{Argmax}|$ to each tied top vote-getter, so that $\sum_{y\in\mathcal{Y}}\bar r_k(y)=1$.
Assume the initial policy $\pi_0$ possesses full support and a unique mode $y^* = \operatorname*{argmax}_{y \in \mathcal{Y}} \pi_0(y)$. Then (as a deterministic population-dynamics result; the corresponding sampled-reward stochastic recursion is outside the scope of this theorem) the policy sequence converges pointwise to the Dirac delta distribution concentrated at $y^*$:
\begin{equation}
    \lim_{k \to \infty} \pi_k(y) = \delta_{y^*}(y) \triangleq \begin{cases} 1 & \text{if } y = y^* \\ 0 & \text{if } y \neq y^* \end{cases}.
\end{equation}
\end{theorem}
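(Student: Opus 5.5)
The plan is to use the closed form of the KL-regularized update and then analyze the resulting multiplicative dynamics. For a fixed reward $\bar r_k$, the maximizer over the simplex of $\mathbb{E}_\pi[\bar r_k]-\beta\,\mathbb{D}_{\mathrm{KL}}(\pi\|\pi_k)$ is the Gibbs tilt
\begin{equation}
\pi_{k+1}(y)=\frac{\pi_k(y)\exp(\bar r_k(y)/\beta)}{\sum_{y'}\pi_k(y')\exp(\bar r_k(y')/\beta)}.
\end{equation}
This follows from the same Lagrangian stationarity argument as in the proof of Proposition~\ref{prop:i-projection}, and uniqueness follows from strict concavity. Two consequences are immediate. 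Full support is preserved for every $k$. The log-ratios also telescope:
\begin{equation}
\log\frac{\pi_{k}(y)}{\pi_{k}(y^*)}=\log\frac{\pi_0(y)}{\pi_0(y^*)}+\frac1\beta\sum_{j<k}\bigl(\bar r_j(y)-\bar r_j(y^*)\bigr).
\end{equation}

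The key step is a monotonicity lemma for the expected majority-vote reward. If $\pi(a)>\pi(b)$, then $\bar r(a)>\bar r(b)$, and if $\pi(a)=\pi(b)$ then $\bar r(a)=\bar r(b)$. I would prove it with a swap coupling. Exchange the labels $a$ and $b$ in the i.i.d. batch; this maps the event ``$a$ wins with tie set $S$'' to ``$b$ wins with tie set $\sigma(S)$.'' Each fixed batch configuration with counts $n_a>n_b$ then has likelihood ratio $(\pi(a)/\pi(b))^{n_a-n_b}>1$ relative to its swap. Since $N\ge2$ and $\pi$ has full support, there is positive probability that $a$ is the unique plurality winner (for example, all $N$ draws equal $a$), so the inequality is strict. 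Together with the Gibbs update this gives $\pi_{k+1}(a)/\pi_{k+1}(b)=(\pi_k(a)/\pi_k(b))e^{(\bar r_k(a)-\bar r_k(b))/\beta}$. By induction, $y^*$ remains the strict unique mode for every $k$, and each ratio $\rho_k(y):=\pi_k(y)/\pi_k(y^*)$ is strictly decreasing in $k$.

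To conclude I need $\rho_k(y)\to0$, not merely monotone decrease. Since $\rho_k(y)$ is decreasing and bounded below, it has a limit $\rho_\infty(y)\in[0,\rho_0(y)]$. Suppose some $\rho_\infty(y)>0$. The map $\pi\mapsto\bar r(\cdot)$ is a polynomial in $\pi$, hence continuous, and $\mathcal Y$ is finite, so I would pass to a limit point $\pi_\infty$ along a subsequence. At $\pi_\infty$ we have $\pi_\infty(y^*)>\pi_\infty(y)>0$, which is strict because $\rho_\infty(y)\le\rho_0(y)<1$. The strict lemma then gives a gap $\bar r_\infty(y^*)-\bar r_\infty(y)=\gamma>0$. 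By continuity the gap exceeds $\gamma/2$ for all large $k$; this uses convergence of the whole ratio vector, which holds because every ratio is monotone. The telescoped sum then diverges to $-\infty$, contradicting $\rho_\infty(y)>0$. Hence $\rho_k(y)\to0$ for every $y\ne y^*$. Normalization, $\pi_k(y^*)=1/\sum_y\rho_k(y)$, then gives $\pi_k\to\delta_{y^*}$ pointwise.

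I expect the main obstacle to be the strict monotonicity lemma with uniform tie-breaking. The swap coupling has to be written so that tie sets containing both $a$ and $b$ are handled. In those configurations $n_a=n_b$, the two labels receive equal credit, and such terms cancel. The configurations with $n_a\neq n_b$ carry the strict inequality. The remaining steps are routine continuity and compactness arguments.
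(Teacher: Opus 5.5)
Your proposal is correct and follows essentially the same route as the paper: the Gibbs closed form of the KL-regularized update, the log-ratio recursion, the count-swap coupling that gives strict monotonicity of $\bar r$ with ties contributing equally, and a continuity argument that yields a uniform positive drift. The only difference is in the last step: you get the uniform gap from convergence of the monotone ratio vector and continuity at the limit $\pi_\infty$, whereas the paper argues by contradiction on a compact set $K_M$ and takes a positive infimum $c_M$ of $\bar r(y^*)-\bar r(y')$ there.
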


\begin{proof}
The optimization problem in Eq.~(27) is convex, and its closed-form solution corresponds to the exponentiated gradient update based on the expected reward vector. The policy update rule is:
\begin{equation}
    \pi_{k+1}(y) = \frac{\pi_k(y) \exp(\bar{r}_k(y)/\beta)}{Z_k}, \quad \text{where } Z_k = \sum_{z \in \mathcal{Y}} \pi_k(z) \exp(\bar{r}_k(z)/\beta).
\end{equation}
Note that while the reward $\bar{r}_k(y)$ is derived from a stochastic process, it represents the \textit{expectation} over the sampling noise. Thus, the update rule describes the deterministic trajectory of the policy under the exact gradient of the expected objective.

Let $y^*$ be the unique mode of $\pi_k$. To prove convergence, we analyze the log-likelihood ratio between the mode $y^*$ and any suboptimal candidate $y' \in \mathcal{Y} \setminus \{y^*\}$. Define $\Lambda_k(y') \triangleq \log \left( \frac{\pi_k(y^*)}{\pi_k(y')} \right)$. The recurrence relation is:
\begin{equation}
    \Lambda_{k+1}(y') = \Lambda_k(y') + \frac{1}{\beta} \underbrace{\left( \bar{r}_k(y^*) - \bar{r}_k(y') \right)}_{\Delta \bar{r}_k(y')}.
\end{equation}
We now prove that $\pi_k(y^*) > \pi_k(y')$ implies $\bar{r}_k(y^*) > \bar{r}_k(y')$, ensuring the drift term $\Delta \bar{r}_k(y')$ is strictly positive.

Let $\mathbf{k} = (k_y)_{y \in \mathcal{Y}}$ be the frequency vector of a batch $\mathcal{D}_N$, which follows a Multinomial distribution $P(\mathbf{k}) = N! \prod_{y} \frac{\pi_k(y)^{k_y}}{k_y!}$.
For any configuration $\mathbf{k}$ in which $y'$ contributes weight $w(\mathbf{k},y'):=\mathbf{1}\{y'\in\operatorname{Argmax}(\mathbf{k})\}/|\operatorname{Argmax}(\mathbf{k})|>0$, swap the counts of $y^*$ and $y'$ to obtain $\phi(\mathbf{k})$: $k'_{y^*}=k_{y'}$, $k'_{y'}=k_{y^*}$, $k'_z=k_z$ otherwise. Since the Argmax indices are merely a function of the count multiset, the swap preserves the cardinality of $\operatorname{Argmax}$, $y^*$ takes over $y'$'s membership in $\operatorname{Argmax}(\phi(\mathbf{k}))$, so $w(\phi(\mathbf{k}),y^*)=w(\mathbf{k},y')$; in particular $\phi$ is an involution and hence a bijection on the set of configurations with $w(\cdot,y')>0$.
Comparing the probability mass of the swapped configurations:
\begin{equation}
    \frac{P(\phi(\mathbf{k}))}{P(\mathbf{k})} = \frac{ \dots \pi_k(y^*)^{k_{y'}} \pi_k(y')^{k_{y^*}} }{ \dots \pi_k(y^*)^{k_{y^*}} \pi_k(y')^{k_{y'}} } = \left( \frac{\pi_k(y^*)}{\pi_k(y')} \right)^{k_{y'} - k_{y^*}}.
\end{equation}
By hypothesis $\pi_k(y^*) > \pi_k(y')$, so the base is $>1$. The configurations on which $y'$ has \emph{strictly} larger count than $y^*$ form a subset on which the exponent $k_{y'}-k_{y^*}\ge 1$, giving $P(\phi(\mathbf{k}))>P(\mathbf{k})$; the remaining tie configurations $k_{y'}=k_{y^*}$ are mapped to themselves under $\phi$ and contribute equal weights to $\bar r_k(y^*)$ and $\bar r_k(y')$. Summing the weighted contributions $w(\cdot,y')P(\mathbf{k})$ and $w(\cdot,y^*)P(\phi(\mathbf{k}))$ and using that for $N\ge 2$ at least one strict-asymmetry configuration $\{k_{y'}=N,k_{y^*}=0,k_z=0\}$ has positive probability and positive weight on $y'$, we conclude $\bar{r}_k(y^*) > \bar{r}_k(y')$.

Returning to the recurrence: since $\pi_0$ has a unique mode $y^*$ and full support, $\Lambda_0(y') \in (0,\infty)$. The strict positivity of $\Delta \bar{r}_k(y')$ ensures $\Lambda_k(y')$ is monotonically increasing.
To conclude $\Lambda_k(y')\to\infty$, we argue by contradiction. Suppose $\Lambda_k(y')\le M<\infty$ along the entire sequence (otherwise we are done). Since $\Lambda_k(y')$ is monotonically non-decreasing, $\Lambda_k(y')\ge \Lambda_0(y') > 0$ for all $k$; in particular the closed separation $\pi_k(y^*)\ge e^{\Lambda_0(y')}\,\pi_k(y')$ holds throughout. Combined with $\Lambda_k(y')\le M$, i.e., $\pi_k(y^*)\le e^{M}\pi_k(y')$, the pair $(\pi_k(y^*),\pi_k(y'))$ stays in the \emph{closed} (hence compact) subset of the simplex
\begin{equation*}
K_M := \bigl\{\pi\in\Delta(\mathcal{Y}):\;\pi(y^*)\ge \pi_0(y^*),\; \pi(y^*)\ge e^{\Lambda_0(y')}\pi(y'),\; \pi(y^*)\le e^{M}\pi(y')\bigr\}.
\end{equation*}
The map $\pi\mapsto \Delta\bar r(y') = \bar r(y^*)-\bar r(y')$ is a polynomial in the entries of $\pi$, hence continuous; the closed separation $\pi(y^*)\ge e^{\Lambda_0(y')}\pi(y')$ keeps $\pi(y^*)>\pi(y')$ strict on $K_M$, so $\Delta\bar r(y')$ is strictly positive there by the bijection argument. By compactness it therefore attains a positive infimum $c_M>0$. Hence $\Lambda_{k+1}(y')-\Lambda_k(y')\ge c_M/\beta$ for every $k$, so $\Lambda_k(y')\ge \Lambda_0(y') + c_M k/\beta\to\infty$, contradicting $\Lambda_k(y')\le M$. Therefore $\Lambda_k(y')\to\infty$ for every $y'\ne y^*$.
The probability of the mode is given by the sigmoid function of these ratios:
\begin{equation}
    \pi_k(y^*) = \frac{1}{1 + \sum_{y' \neq y^*} \exp(-\Lambda_k(y'))}.
\end{equation}
As $\Lambda_k(y') \to \infty$, the terms $\exp(-\Lambda_k(y')) \to 0$. Thus, $\lim_{k \to \infty} \pi_k(y^*) = 1$, and $\lim_{k \to \infty} \pi_k(y') = 0$ for all $y' \neq y^*$.
\end{proof}

\section{Further Discussion}
\label{apd:further}

\paragraph{Considerations on Baseline Comparisons.}
We acknowledge that, due to computational constraints, our comparison with certain RLIF baselines and \textit{One-shot EM} relies on their respective open-source checkpoints. This may introduce minor inconsistencies in the comparative analysis, as the original training recipes might differ from our internal pipeline. However, to ensure a controlled and equitable evaluation, we trained our GRPO baseline in-house using an experimental configuration identical to that of PowerFlow. Given that this setup is also virtually indistinguishable from the training environment described in the EMPO study~\cite{empo}, it provides a high-fidelity benchmark for assessing relative performance gains. This rigorous alignment of training conditions ensures that the superior results demonstrated by PowerFlow are attributable to the principled nature of our distribution matching framework rather than disparate optimization settings.

\paragraph{Comparison with Temperature Scaling and Token-level Optimization.}
It is crucial to delineate the fundamental distinctions between PowerFlow and common strategies such as temperature scaling or token-level log-probability optimization. As established in recent analysis~\cite{karan2025reasoning}, simply lowering the sampling temperature does not equate to sampling from a true power distribution $p^\alpha$ for $\alpha > 1$. Specifically, temperature scaling follows a conditional distribution $p_{temp}(x_t|x_{<t}) \propto (\sum_{x_{>t}} p(x_0, \dots, x_T))^\alpha$, which averages future likelihoods in a greedy manner. In contrast, the power distribution targets $p_{pow}(x_t|x_{<t}) \propto \sum_{x_{>t}} p(x_0, \dots, x_T)^\alpha$, thereby explicitly accounting for the sharpening of high-likelihood future paths—a property essential for surfacing correct reasoning trajectories. This theoretical gap explains why the \textit{Low-temp} baseline in our experiments fails to match the robust elicitation achieved via principled distribution matching. Furthermore, treating the token-level average log-probability as a reward objective discards critical information regarding the global trajectory density. Such methods frequently succumb to local optima by over-exploiting low-entropy tokens, leading to the vacuous or repetitive generation observed in prior RLIF studies~\cite{rlif,ghimire2026prism} and illustrated in Figure~\ref{fig:comparison}. PowerFlow instead remains theoretically anchored in the global distribution $\pi^*(y|q) \propto p_{\text{base}}(y|q)^\alpha / Z'_\phi(q)^{|y|}$. By utilizing the $Z'_\phi(q)^{|y|}$ term to neutralize the structural length bias of autoregressive generation, our framework effectively filters the base model's density. Although this reparameterization does not strictly preserve mode rankings across varying sequence lengths, it largely maintains the model's semantic essence and relative structure while mitigating the degenerative biases inherent in generation probabilities. While this length-aware objective serves as a pragmatic compromise to neutralize the structural length bias of autoregressive models, it represents an initial step toward achieving true length invariance in distribution matching. We anticipate that future research will yield even more principled mechanisms for elegantly decoupling sequence length from semantic density.

\paragraph{Relationship with the FlowRL Framework.}
While PowerFlow incorporates architectural insights from FlowRL, such as the use of an amortized partition function and importance sampling, there are fundamental distinctions in our theoretical motivation and treatment of sequence length. FlowRL employs GFlowNets within the paradigm of reinforcement learning from verifiable rewards (RLVR) to specifically mitigate the challenges of mode collapse and the over-optimization of dominant reward signals. In contrast, PowerFlow is formulated as a purely unsupervised framework for directional capability elicitation, aligning the policy with the intrinsic $\alpha$-power distribution of the base model itself. This conceptual shift fundamentally redefines the role of sequence length, transitioning it from a numerical stability concern into a structural alignment requirement. Specifically, in FlowRL, length normalization is primarily introduced as a reward-shaping technique to stabilize training and mitigate gradient explosion in long-trajectory reasoning. PowerFlow instead introduces a length-aware Trajectory-Balance (LA-TB) objective derived from a structural reparameterization of the energy surface, where the partition function is reformulated as $Z_\phi(q, y) = (Z'_\phi(q))^{|y|}$. This effectively projects the distribution matching problem onto a space of geometric mean probabilities. This reparameterization is not merely a numerical stabilizer but a theoretical necessity for robust, unsupervised distribution alignment; without it, the intensification of probability mass under $\alpha > 1$ would inevitably drive the model toward trivial, short sequences that exploit the exponential decay of path probabilities. Thus, PowerFlow establishes length invariance as a principled foundation for eliciting latent capabilities on a normalized energy landscape, moving beyond pragmatic modifications for optimization stability.


\end{document}